\let\proof\relax
\let\endproof\relax
\newtheorem{definition}{Definition}
\newtheorem{theorem}{Theorem}
\newtheorem{corollary}{Corollary}
\newtheorem{lemma}{Lemma}
\newtheorem{remark}{Remark}
\theoremstyle{remark}
\newcommand{\ra}{\rightarrow}
\newcommand{\mb}[1]{\mathbb{#1}}
\newcommand{\mbf}[1]{\mathbf{#1}}
\newcommand{\mc}[1]{\mathcal{#1}}
\newcommand{\rr}{\mathbb{R}}
\newcommand{\ee}{\mathbb{E}}
\newcommand{\pp}{\mathbb{P}}
\renewcommand{\epsilon}{\varepsilon}
\renewcommand{\phi}{\varphi}
\newcommand{\inner}[1]{\left\langle #1 \right\rangle}
\renewcommand{\l}{\left(}
\renewcommand{\r}{\right)}
\newcommand{\indicator}{\mathbbm{1}}
\newcommand{\grad}{\nabla}
\newcommand{\ol}{\overline}
\renewcommand{\ul}{\underline}
\newcommand{\ut}{\undertilde}
\newcommand\defeq{\stackrel{\mathclap{\normalfont\tiny\mbox{def}}}{=}}
\newcommand*\circled[1]{\tikz[baseline=(char.base)]{\node[shape=circle,draw,inner sep=1pt] (char) {#1};}}
\renewcommand{\texttt}[1]{{\fontfamily{lmtt}\selectfont#1}}
\title{Provably Efficient Reinforcement Learning in Decentralized General-Sum Markov Games\footnotetext{Department of Electrical and Computer Engineering \& Coordinated Science Laboratory, University of Illinois Urbana-Champaign. Email addresses: \{weichao2, basar1\}@illinois.edu. Research leading to this work was supported in part by ONR MURI Grant N00014-16-1-2710 and in part by AFOSR Grant FA9550-19-1-0353. We thank Kaiqing Zhang and Lin F. Yang for helpful discussions. }
}
\def\@fnsymbol#1{\ensuremath{\ifcase#1\or  \natural \or \dagger\or * \or \ddagger\or
		\mathsection\or \mathparagraph\or \|\or **\or \dagger\dagger
		\or \ddagger\ddagger \else\@ctrerr\fi}}
\author{Weichao Mao  \qquad \qquad \qquad  Tamer Ba\c{s}ar}
\date{\normalsize }
\begin{document}

\maketitle

\begin{abstract}
This paper addresses the problem of learning an equilibrium efficiently in general-sum Markov games through decentralized multi-agent reinforcement learning. Given the fundamental difficulty of calculating a Nash equilibrium (NE), we instead aim at finding a coarse correlated equilibrium (CCE), a solution concept that generalizes NE by allowing possible correlations among the agents' strategies. We propose an algorithm in which each agent independently runs optimistic V-learning (a variant of Q-learning) to efficiently explore the unknown environment, while using a stabilized online mirror descent (OMD) subroutine for policy updates. We show that the agents can find an $\epsilon$-approximate CCE in at most $\widetilde{O}( H^6S A /\epsilon^2)$ episodes, where $S$ is the number of states, $A$ is the size of the largest individual action space, and $H$ is the length of an episode. This appears to be the first sample complexity result for learning in generic general-sum Markov games. Our results rely on a novel investigation of an anytime high-probability regret bound for OMD with a dynamic learning rate and weighted regret, which would be of independent interest. One key feature of our algorithm is that it is fully \emph{decentralized}, in the sense that each agent has access to only its local information, and is completely oblivious to the presence of others. This way, our algorithm can readily scale up to an arbitrary number of agents, without suffering from the exponential dependence on the number of agents. 
\end{abstract}

\section{Introduction}\label{sec:intro} 
Reinforcement learning (RL) has recently shown the capability to solve many challenging sequential decision-making problems, ranging from the game of Go~\citep{silver2016mastering}, Poker~\citep{brown2018superhuman}, and real-time strategy  games~\citep{vinyals2019grandmaster}, to autonomous driving~\citep{shalev2016safe}, and robotics~\citep{kober2013reinforcement}. Many of the RL applications involve the interaction of multiple agents, which are modeled systematically within the framework of multi-agent reinforcement learning (MARL). These success stories have inspired a remarkable line of studies on the theoretical aspects of MARL.

Most of the theoretical efforts in MARL, however, have been devoted to Markov games with special reward structures, such as fully competitive or cooperative games. One prevalent setting is MARL in two-player zero-sum Markov games \citep{wei2017online,xie2020learning}, where the two agents have exactly opposite objectives. Such prevalence is mostly due to the fundamental computational difficulty in more general scenarios: Finding a Nash equilibrium (NE) is known to be PPAD-complete both for two-player general-sum games~\citep{chen2009settling} and zero-sum games with more than two players~\citep{daskalakis2009complexity}. Given the daunting impossibility results, convergence to NE in generic games with no special structure seems hopeless in general. As a result, many important problems in the multi-player general-sum settings, which can model broader and more practical interactive behaviors of decision makers, have been left relatively open. 

In this paper, we make an initial attempt toward understanding some of the theoretical aspects of MARL in decentralized general-sum Markov games. Given the inherent challenges for computing Nash equilibria, we need to target a slightly weaker solution concept than NE. One reasonable alternative is to find a coarse correlated equilibrium (CCE) \citep{moulin1978strategically,aumann1987correlated} of the game. Unlike NE, CCE can always be found in polynomial time for general-sum games \citep{papadimitriou2008computing}, and due to its tractability, calculating CCE has also been commonly used as an important subroutine toward finding Nash equilibria in two-player zero-sum Markov games \citep{xie2020learning,bai2020near}.

Our interest in CCE is mostly motivated by the following folklore result for learning in normal-form games: When the agents independently run no-regret learning algorithms in general-sum normal-form games, their empirical frequency of plays converges to the set of CCE of the game \citep{hart2000simple,cesa2006prediction}. In no-regret learning, each agent independently adapts its policy to minimize the cumulative regret based on only its local information, irrespective of the actions or rewards of the other agents. Well-known examples of no-regret learning algorithms include multiplicative weights update (MWU) \citep{auer1995gambling} and online gradient descent \citep{zinkevich2003online}. Such a folk result hence suggests that CCE is a natural outcome of the simple and \emph{uncoupled} learning dynamics of the agents. A natural question to ask is whether a similar result also holds for Markov games. Specifically, in this paper, we ask the following questions: Can we find CCE in general-sum Markov games using decentralized/uncoupled learning dynamics? If so, can we achieve such a result efficiently, by showing an explicit sample complexity upper bound? 

Before answering these questions, we would like to remark that decentralized MARL in general-sum games can be highly challenging. Specifically, in decentralized learning\footnote{This setting has been studied under various names in the literature, including individual learning \citep{leslie2005individual}, decentralized learning~\citep{arslan2016decentralized}, agnostic learning~\citep{tian2020provably,wei2021last}, and independent learning~\citep{claus1998dynamics,daskalakis2020independent}. It also belongs to a more general  category of teams/games with decentralized information structure~\citep{ho1980team,nayyar2013common,nayyar2013decentralized}.}, each agent only has access to its own local information (history of states, local rewards and local actions), and can neither communicate with other agents nor be coordinated by any central controller during learning. In fact, we require that each agent is completely unaware of the underlying structure of the game, or the presence of the other agents. In decentralized learning, since both the reward and the transition are affected by the other agents, the environment becomes \emph{non-stationary} from each agent's own perspective, especially when the agents learn and update their policies simultaneously. Hence, an agent needs to efficiently explore the unknown environment while bearing in mind that the information it gathered a while ago might no longer be accurate. This makes many successful single-agent RL solutions, which assume that the agent is learning in a stationary Markovian environment, inapplicable. Furthermore, compared with RL in two-player zero-sum games, an additional challenge in general-sum games is \emph{equilibrium selection}. In zero-sum games, all NE have the same value \citep{filar2012competitive}, and there is no ambiguity in defining the sub-optimality of a policy. However, in general-sum games, multiple equilibria can have different values. We hence need to first identify which equilibrium to compare with when we are trying to measure the performance of a policy. Finally, we remark that we consider in this paper the fully observable setup, where the agents have full access to the state information. This is in contrast to the more general team theory with partially observable information structures~\citep{ho1980team,yuksel2013stochastic}, such as those modeled by decentralized partially observable Markov decision processes (DecPOMDPs) \citep{bernstein2002complexity,nayyar2013decentralized}, where each agent has only a private partial observation of the state. Learning or even computing a NE in the latter case is much more challenging, and we do not target such a more general setup in this paper. 

Despite the challenges identified above, we answer both of the aforementioned questions affirmatively, by presenting an algorithm in which the agents can find a CCE in general-sum Markov games efficiently through decentralized learning. Our contributions in this paper are summarized as follows.

\textbf{Contributions.} 1) We study provably efficient exploration in decentralized general-sum Markov games. We propose an algorithm named Optimistic V-learning with Stabilized Online Mirror Descent (V-learning OMD), where V-learning~\citep{bai2020near} is a simple variant of Q-learning. In V-learning OMD, each agent independently runs an optimistic V-learning algorithm to explore the unknown environment, while using an online mirror descent procedure for policy updates. Following the learning process, the CCE can be extracted by simply letting the agents randomly repeat their previous strategies using a common random seed. 2) We show that if all agents in the game run the V-learning OMD algorithm, they can find an $\epsilon$-approximate coarse correlated equilibrium in at most $\widetilde{O}( S A H^6/\epsilon^2)$ episodes, where $S$ is the number of states, $A$ is the size of the largest action space among the agents, and $H$ is the length of an episode. Our result complements its counterpart in normal-form games that uncoupled no-regret learning dynamics lead to CCE.  We further show that our sample complexity is nearly-optimal in that it matches all the parameter dependences in the information-theoretical lower bound, except the horizon length $H$.  3) As an important building block of our analysis, we conduct a novel investigation of a high-probability regret bound for OMD with a dynamic learning rate and weighted regret, which might be of independent interest.  We emphasize that due to the decentralization property, our algorithm readily generalizes to an arbitrary number of agents without suffering from the exponential dependence on the number of agents. Our work appears to be the first to provide non-asymptotic guarantees for MARL in generic general-sum Markov games with efficient exploration, with an additional appealing feature of being decentralized.

\textbf{Outline. } The rest of the paper is organized as follows: We start with a literature review in Section~\ref{sec:related}. In Section~\ref{sec:preliminaries}, we introduce the mathematical model of our problem and necessary preliminaries. In Section~\ref{sec:algorithm}, we present our V-learning OMD algorithm for decentralized learning in general-sum Markov games. A sample complexity analysis of V-learning OMD is given in Section~\ref{sec:analysis}. In Section~\ref{sec:omd}, we analyze a specific adversarial multi-armed bandit problem, which plays a central role in our analysis of the V-learning OMD algorithm. Finally, we conclude the paper in Section~\ref{sec:conclusions}.

\section{Related Work}\label{sec:related}
Multi-agent reinforcement learning is generally modeled within the mathematical framework of stochastic games \citep{shapley1953stochastic} (also referred to as Markov games). Given the PPAD completeness of finding a Nash equilibrium in generic games~\citep{daskalakis2009complexity,chen2009settling}, convergence to NE has mostly been studied in games with special structures, such as two-player zero-sum games or cooperative games. 

In two-player zero-sum Markov games, \citet{littman1994markov} has proposed a Q-learning based algorithm named minimax-Q, whose asymptotic convergence guarantee has later been established in \citet{littman1996generalized}. Recently, various sample efficient methods have been proposed in this setting \citep{wei2017online,bai2020provable,sidford2020solving,xie2020learning,bai2020near,liu2021sharp,zhao2021provably}. Most notably, several works \citep{daskalakis2020independent,tian2020provably,wei2021last} have investigated two-player zero-sum games in a \emph{decentralized} environment similar to ours, where an agent cannot observe the actions of the other agent.  

Another line of research with convergence guarantees is RL in teams or cooperative games. \citet{wang2002reinforcement} have proposed an optimal adaptive learning algorithm that converges to the optimal Nash equilibrium in Markov teams, where the agents share the same objective. More recently, \citet{arslan2016decentralized} have shown that decentralized Q-learning can converge to NE in weakly acyclic games, which cover Markov teams and potential games as important special cases. Later, \citet{yongacoglu2019learning} have further improved \citet{arslan2016decentralized} and achieved convergence to the team-optimal equilibrium. Overall, the aforementioned works have mainly focused on two-player zero-sum games or cooperative games. These results do not carry over in any way to the general-sum games that we consider in this paper.  

A few works have considered games beyond the zero-sum or cooperative settings: \citet{hu2003nash}, \citet{littman2001friend}, and \citet{zehfroosh2020pac} have established convergence guarantees under the assumptions that either a saddle point equilibrium or a coordination equilibrium exists. \citet{greenwald2003correlated} has bypassed the computation of NE in general-sum games by targeting correlated equilibria instead, but no theoretical convergence result has been given. Other approaches for finding NE in general-sum games include minimizing the Bellman-like residuals learned from offline/batch data~\citep{perolat2017learning}, or using a two-timescale algorithm to learn the policy of each player from an optimization perspective~\citep{prasad2015two}. Nevertheless, none of these works has considered sample-efficient exploration in a decentralized environment, a more challenging objective that we pursue in this paper. More recently, \citet{liu2021sharp} have studied the non-asymptotic properties of learning CCE in general-sum Markov games, but their sample complexity bound scales exponentially in the number of agents as a consequence of using a centralized learning approach.

\section{Preliminaries}\label{sec:preliminaries}
An $N$-player (episodic) general-sum Markov game is defined by a tuple \allowbreak $(\mc{N}, H, \mc{S},\{\mc{A}^i\}_{i=1}^N, \{r^i\}_{i=1}^N, P)$, where (1) $\mc{N} = \{1,2,\dots,N\}$ is the set of agents; (2) $H\in\mb{N}_+$ is the number of time steps in each episode; (3) $\mc{S}$ is the finite state space; (4) $\mc{A}^i$ is the finite action space for agent $i\in\mc{N}$; (5) $r^i:[H]\times \mc{S}\times \mc{A} \ra [0,1]$ is the reward function for agent $i$, where $\mc{A} = \times_{i=1}^N \mc{A}^i$ is the joint action space; and (6) $P: [H]\times\mc{S}\times \mc{A} \ra \Delta(\mc{S})$ is the transition kernel. We remark that both the reward function and the state transition function depend on the joint actions of all the agents. 
We assume for simplicity that the reward function is deterministic. Our results can be easily generalized to stochastic reward functions. 

The agents interact with the unknown environment for $K$ episodes, and we let $T = KH$ be the total number of time steps. At each time step, the agents observe the state $s_h \in \mc{S}$, and take actions $a^i_h \in\mc{A}^i, i\in\mc{N}$ simultaneously. We let $a_h =  (a_h^1,\dots, a_h^N) $.  The reward $r^i_h(s_h,a_h)$ is then revealed to agent $i$ privately, and the environment transitions to the next state $s_{h+1}\sim P_h(\cdot | s_h,a_h)$. We assume for simplicity that the initial state $s_1$ of each episode is fixed.

As mentioned before, we focus on the  \emph{decentralized} setting: Each agent only observes the states and its own actions and rewards, but not the actions or rewards of the other agents. In fact, each agent is completely oblivious of the existence of other agents, and is also unaware of the number of agents in the game. The agents are also not allowed to communicate with each other. This decentralized information structure requires an agent to learn to make decisions based on only its local information.

For each agent $i$, a (Markov) policy is a mapping from the time index and the state space to a distribution over its action space. We define the space of policies for agent $i$ by $\Pi^i = \{ \pi^i: [H]\times \mc{S}\ra \Delta(\mc{A}^i) \}$. The joint policy space is denoted by $\Pi = \times_{i=1}^N \Pi^i$. Each agent seeks to find a policy that maximizes its own cumulative reward. A joint policy induces a probability measure on the sequence of states and joint actions. For a joint policy $\pi \in \Pi$, and for each time step $h\in[H]$, state $s \in \mc{S}$, and joint action $a\in \mc{A}$, we define the (state) value function and the state-action value function (Q-function) for agent $i$ as follows: 
\begin{equation}
\begin{aligned}
V_h^{i,\pi} (s) \defeq \ee_\pi \left[ \sum_{h'=h}^{H}r^i_{h'}(s_{h'},a_{h'})\mid s_h = s \right],
Q_h^{i,\pi} (s,a) \defeq \ee_\pi \left[ \sum_{h'=h}^{H}r^i_{h'}(s_{h'},a_{h'})\mid s_h = s, a_h = a \right]. \label{eqn:value} 
\end{aligned}
\end{equation}
For convenience of notations, we use the superscript $-i$ to denote the set of agents excluding agent $i$, i.e., $\mc{N}\backslash \{i\}$. For example, we can rewrite $\pi = (\pi^i,\pi^{-i})$ using this convention.  

\begin{definition} (Best response).\label{def:best_response}
	A policy $\pi^{i \star} \in \Pi^i$ is a \emph{best response} to $\pi^{-i}\in\Pi^{-i}$ for agent $i$ if
	$
		V_1^{i, (\pi^{i \star}, \pi^{-i})}(s) = \sup_{\pi^i} V_1^{i,(\pi^{i }, \pi^{-i})}(s),
	$ 
	for any state $s \in \mc{S}$. 
\end{definition}
The supremum is taken over all \emph{general policies}\footnote{A general policy~\citep{bai2020near} of agent $i$ is a set of maps $\pi^i = \{\pi^i_h : \rr \times (\mc{S}\times \mc{A}^i\times\rr)^{h-1}\times  \mc{S}\ra \Delta(\mc{A}^i)\}_{h\in[H]}$ from a random variable $z\in\rr$ and a history of length $h-1$ to a distribution of actions in $\mc{A}^i$. } that are not necessarily Markov. When $\pi^{-i}\in\Pi^{-i}$ is Markov, the environment is stationary from the perspective of agent $i$, and there always exists a Markov best response. 
\begin{definition} (Nash equilibrium). 
	A Markov joint policy $\pi = (\pi^{i}, \pi^{-i }) \in \Pi$ is  a Nash equilibrium (NE) if $\pi^{i }$ is a best response to $\pi^{-i }$ for all $i\in\mc{N}$.  
\end{definition}

Given the fundamental difficulty of calculating Nash equilibria in general-sum games~\citep{daskalakis2009complexity,chen2009settling}, in this paper, we mainly target a weaker solution concept named coarse correlated equilibrium \citep{moulin1978strategically,aumann1987correlated}. CCE allows possible correlations among the agents' policies. Before we formally define CCE, we need to refine the above definitions of policies to reflect such correlations. We define $\sigma =\{\sigma_h: \rr \times (\mc{S}\times \mc{A}\times \rr)^{h-1}\times\mc{S}\ra \Delta(\mc{A})\}_{h\in[H]}$ as a set of (non-Markov) \emph{correlated policies}, where for each $h\in[H]$, $\sigma_h$ maps from a random variable $z\in\rr$ and a history of length $h-1$ to a distribution over the \emph{joint action} space $\Delta(\mc{A})$. We also assume that the agents can access a common source of randomness (e.g., a common random seed) to sample the random variable $z$. One can see that our definition of correlated policies generalizes the usual concept of policies as we defined earlier, since the latter only allows mappings to each agent's individual action space simplex. Following the new notations, we can analogously define $\sigma^i$ or $\sigma^{-i}$ for each $i\in\mc{N}$ as the proper marginal distributions of $\sigma$. The best responses $\sigma^{i\star}$, value functions $V_h^{i,\sigma}(s)$, and state-action value functions $Q_h^{i,\sigma}(s,a)$ for agent $i$ can be defined similarly as in Definition~\ref{def:best_response} and Equation \eqref{eqn:value}. We are now in a position to define the CCE in an episodic Markov game, as follows:

\begin{definition}\label{def:CCE}
	(Coarse correlated equilibrium). A correlated policy $\sigma$ is a coarse correlated equilibrium if for every agent $i\in\mc{N}$,
	\[
	V_1^{i, \sigma}(s_1) \geq V_1^{i, (\sigma^{i\star},\sigma^{-i})}(s_1).
	\]
\end{definition}

For illustrative purposes, let us compare the definitions of NE and CCE in a simple normal-form game named Hawk-Dove (with no state transitions). There are two players in this game. The row player has the action space $\mc{A} = \{a_1,a_2\}$, and the column player's action space is $\mc{B}=\{b_1,b_2\}$. The reward matrix of the Hawk-Dove game is described in Table~\ref{tbl:game}. There are three Nash equilibria in this game: $(a_1,b_2)$ and $(a_2,b_1)$ are two pure strategy NE, and $((0.5,0.5), (0.5,0.5))$ is a NE in mixed strategies.  Table~\ref{tbl:cce} gives a CCE distribution of the Hawk-Dove game, which assigns equal probabilities to three action pairs: $(a_1,b_1)$, $(a_1,b_2)$, and $(a_2,b_1)$. We can see that NE defines for each player an \emph{independent} probability distribution over a player's own action space; in contrast, a CCE is a probability distribution over the joint action space of the players. In this sense, CCE generalizes NE by allowing possible correlations among the strategies of the agents. In our proposed algorithm, such correlation is implicitly achieved by letting the players use a common random seed. 

\begin{table}[!htb]
	\normalsize
	\def\arraystretch{1.3}
	\begin{minipage}{.5\linewidth}
		\caption{The Hawk-Dove game.}
		\centering
		\begin{tabular}{ccc}
			&           $b_1$            &          $b_2$             \\ \cline{2-3} 
			\multicolumn{1}{c|}{$a_1$}  & \multicolumn{1}{c|}{4,4} & \multicolumn{1}{c|}{1,5} \\ \cline{2-3} 
			\multicolumn{1}{c|}{$a_2$} & \multicolumn{1}{c|}{5,1} & \multicolumn{1}{c|}{0,0} \\ \cline{2-3} 
		\end{tabular}\label{tbl:game}
	\end{minipage}%
	\begin{minipage}{.5\linewidth}
		\centering
		\caption{A CCE in the Hawk-Dove game.}
		\begin{tabular}{ccc}
			&           $b_1$            &          $b_2$             \\ \cline{2-3} 
			\multicolumn{1}{c|}{$a_1$}  & \multicolumn{1}{c|}{1/3} & \multicolumn{1}{c|}{1/3} \\ \cline{2-3} 
			\multicolumn{1}{c|}{$a_2$} & \multicolumn{1}{c|}{1/3} & \multicolumn{1}{c|}{0} \\ \cline{2-3} 
		\end{tabular}\label{tbl:cce}
	\end{minipage} 
\end{table}

Unlike NE, it can be shown that a CCE can always be calculated in polynomial time for general-sum games \citep{papadimitriou2008computing}. If the inequality in Definition~\ref{def:CCE} only holds approximately, then we have an approximate notion of coarse correlated equilibrium.

\begin{definition} \label{def:approxCCE}
	($\epsilon$-approximate CCE). For any $\epsilon>0$, a correlated policy $\sigma$ is an $\epsilon$-approximate coarse correlated equilibrium if for every agent $i\in\mc{N}$, 
	\[
	V_1^{i, \sigma}(s_1) \geq V_1^{i, (\sigma^{i\star},\sigma^{-i})}(s_1) - \epsilon. 
	\]
\end{definition}

For notational convenience, in most parts of the paper we illustrate our algorithms and results for the special case of two-player general-sum games, i.e., $N=2$. It is straightforward to extend our results to the general $N$-player games as we defined above. With two players, we use $\mc{A}$ and $\mc{B}$ to denote the action spaces of players 1 and 2, respectively. Let $S = |\mc{S}|$, $A = |\mc{A}|$ and $B = |\mc{B}|$. We assume that an upper bound $\max\{A,B\}$ on the sizes of the action spaces is common knowledge. We also rewrite the correlated policies $(\sigma^1,\sigma^2)$ as $(\mu,\nu)$.

\section{The V-Learning OMD Algorithm}\label{sec:algorithm} 
In this section, we introduce our algorithm Optimistic V-learning with Stabilized Online Mirror Descent (V-learning OMD) for decentralized multi-agent RL in general-sum Markov games.  

V-learning OMD naturally integrates the idea of optimistic V-learning in single-agent RL~\citep{jin2018q} with Online Mirror Descent (OMD)~\citep{nemirovskij1983problem,zinkevich2003online} in online convex optimization. First, our algorithm uses optimistic V-learning to efficiently explore the unknown environment, as in single-agent RL. Second, each agent selects its actions following a no-regret OMD algorithm in order to achieve a CCE. The intuition of using no-regret learning here is to defend against the unobserved behavior of the opponents, by presuming that the opponents' behavior will impair the reward sequence arbitrarily. Seemingly conservative, we will show  that this suffices to find the CCE. The use of no-regret learning is also reminiscent of the well-known result in normal-form games that if all agents run a no-regret learning algorithm, the empirical frequency of their actions converge to a CCE \citep{cesa2006prediction}. These components also make our algorithm decentralized, which can be implemented individually using only the local rewards received and the local actions executed, without any communication among the agents.

\begin{algorithm*}[!tbp]
	\textbf{Define:} $F(\theta) = \sum_{a=1}^A (\theta(a) \log(\theta(a)) - \theta(a))$ for $\theta\in\rr_+^A$, $D_F (u,v) = F(u) - F(v) - \inner{u-v, \grad F(v)}$ for $u,v\in\rr_+^A$.
	
	\textbf{Initialize:} $\overline{V}_h(s) =V_h(s)\gets H-h+1, N_h(s)\gets 0, \theta_h(a\mid s)\gets 1/A$, $\forall h\in[H+1],s\in\mc{S},a\in\mc{A}$. 
	
	\For{episode $k\gets 1$ to $K$}
	{
		Receive $s_1$\;
		\For{step $h\gets 1$ to $H$}
		{
			Take action $a_h \sim \theta_h(\cdot \mid s_h)$\;
			Observe reward $r_h$ and next state $s_{h+1}$\;
			$N_h(s_h) \gets N_h(s_h) + 1,t \gets N_h(s_h)$\;
			$\alpha_t \gets \frac{H+1}{H+t}, \beta_t \gets c\sqrt{\frac{H^4 A \iota}{t}}, \gamma_t \gets \sqrt{\frac{\log A}{At}}, \eta_t \gets \sqrt{\frac{\log A}{At}}$\;
			$V_{h}\left(s_{h}\right) \gets  \left(1-\alpha_{t}\right) V_{h}\left(s_{h}\right)+\alpha_{t}\left(r_{h}+\ol{V}_{h+1}\left(s_{h+1}\right)+\beta_{t}\right)$\;
			$\ol{V}_h(s_h)\gets \min\{V_h(s_h), H-h+1\}$\;
			\For{action $a \in \mc{A}$}
			{
				$\hat{l}_{h}(s_h, a) \leftarrow (H-r_h-\ol{V}_{h+1}(s_{h+1})) \indicator\left\{a_{h}=a\right\} /\left(\theta_{h}(a \mid s_h)+\gamma_{t}\right)$\;
			}
			$\theta' \gets \arg\min_{\theta \in \Delta(\mc{A})} \left\{ \eta_t \inner{\theta,  \hat{l}_h(s_h,\cdot )} + D_F(\theta, \theta_h(\cdot \mid s_h))\right\}$\;
			$\theta_h(\cdot \mid s_h)\gets \lambda_t \theta' + (1-\lambda_t) \mbf{1}/A$, where $\lambda_t = \frac{\eta_{t+1} \alpha_t (1-\alpha_{t+1})}{\eta_t \alpha_{t+1}}$\;
		}
	}
	\caption{Optimistic V-learning with Stabilized Online Mirror Descent (V-learning OMD)}\label{alg:qomd}
\end{algorithm*}

The algorithm run by agent 1 (with action space $\mc{A}$) is presented in Algorithm~\ref{alg:qomd}. The algorithm for agent 2 (or other agents in the setting with more than two agents) is symmetric, by simply replacing the action space $\mc{A}$ with the agent's own action space. We thus omit the index of an agent in the notations for clarity. We use $\theta_h(a\mid s_h)$ to denote the probability of taking action $a$ at state $s_h$ and step $h$, where $\theta_h(\cdot \mid s_h)\in\Delta(\mc{A})$. At each step $h$ of an episode, the agent first takes an action $a_h$ according to a policy $\theta_h(\cdot \mid s_h)$ for the current state $s_h$, and observes the reward $r_h$ and the next state $s_{h+1}$. It also counts the number of times $t \defeq N_h(s_h)$ that state $s_h$ has been visited, and constructs a bonus term $\beta_t = c\sqrt{\frac{H^4 A \iota}{t}}$ ($c$ is some absolute constant and $\iota$ is a log factor to be defined later) that is used to upper bound the state value function. The agent then updates the optimistic state value functions by:
\begin{equation}\label{eqn:updaterule}
V_{h}\left(s_{h}\right) \gets \left(1-\alpha_{t}\right) V_{h}\left(s_{h}\right)+\alpha_{t}\left(r_{h}+\ol{V}_{h+1}\left(s_{h+1}\right)+\beta_{t}\right),
\end{equation}
where the learning rate is $\alpha_t = (H+1)/(H+t)$. This update rule essentially follows the optimistic Q-learning algorithm~\citep{jin2018q} in the single-agent scenario, except that instead of estimating the Q-functions, we maintain optimistic estimates of the state value functions. This is because the definition of $Q(s,a)$ explicitly depends on the joint actions of all the agents, which cannot be observed in a decentralized environment. Such an argument is also consistent with the Optimistic Nash V-learning~\citep{bai2020near} and the V-OL~\citep{tian2020provably} algorithms for RL in two-player zero-sum games. 

Unlike RL in the single-agent problem where the agent takes an action with the largest optimistic Q-function, in a multi-agent  environment, the agent proceeds more conservatively by running an adversarial bandit algorithm to account for the unobserved effects of other agents' policy changes. At each step $h\in[H]$ and each state $s_h\in\mc{S}$, we use a variant of online mirror descent with bandit feedback to compute a policy $\theta_h(\cdot \mid s_h)$. OMD is an iterative process that computes the current policy by carrying out a simple gradient update in the dual space, where the dual space is defined by a mirror map (or a regularizer) $F$. In our algorithm, we use a standard unnormalized negentropy regularizer $F(\theta) = \sum_{a=1}^A (\theta(a) \log(\theta(a)) - \theta(a))$ for $\theta\in \rr_+^A$. Given a mirror map $F$, the $F$-induced Bregman divergence is defined as
$
D_F (u,v) = F(u) - F(v) - \inner{u-v, \grad F(v)}. 
$ 
Given the (bandit-feedback) loss vector $\hat{l}_h(s_h,\cdot)$ at step $h$ and state $s_h$, the OMD update rule is given by (Line 13 of Algorithm~\ref{alg:qomd}):
\[
\theta^{\text{new}}(\cdot \mid s_h) \gets \arg\min_{\theta \in \Delta(\mc{A})} \left\{ \eta_t \inner{\theta,  \hat{l}_h(s_h,\cdot )} + D_F(\theta, \theta^{\text{old}}(\cdot \mid s_h))\right\},
\]
where $\eta_t = \sqrt{\log A/(At)}$ is the learning rate. We remark that OMD itself is a well-developed algorithmic framework with a rich literature. But in our case, to be consistent with the changing learning rate in the V-learning part and the high-probability nature of the sample complexity bounds, we additionally require an OMD algorithm to have (1) a dynamic learning rate and (2) a high probability regret bound, with respect to (3) a weighted definition of regret.  Such a result is absent in the literature as far as we know. Interestingly, incorporating OMD with a dynamic learning rate is an active and challenging sub-area per se: An impossibility result~\citep{orabona2018scale} has shown that standard OMD with an $\eta_t \propto \sqrt{1/t}$ learning rate can incur linear regret when the Bregman divergence is unbounded, which actually covers our choice of $D_F$. A stabilization technique~\citep{fang2020online} was later introduced to resolve this problem, by replacing the policy at each step with a convex combination of this policy and the initial policy. This stabilization technique is also helpful in our method (Line 14 in Algorithm~\ref{alg:qomd}), although the design of the convex combination is a little more involved due to the weighted regret. We provide a more detailed description of the bandit subroutine and an analysis of our OMD algorithm in Section~\ref{sec:omd}.

\section{Theoretical Analyses}\label{sec:analysis}
In this section, we present our main results on the sample complexity upper bound of V-learning OMD, and characterize the fundamental limits of the problem by providing a lower bound. 

We first introduce a few notations to facilitate the analysis. For a given step $h\in[H]$ of episode $k\in[K]$, we denote by $s_h^k$ the state that the agents observe at this step. Let $\mu_h^k: \mc{S}\ra \Delta(\mc{A})$ and $\nu_h^k: \mc{S}\ra \Delta(\mc{B})$ be the (interim) strategies at step $h$ of episode $k$ specified by $\theta_h$ in Algorithm~\ref{alg:qomd} to agents 1 and 2, respectively. Let $a_h^k\in\mc{A}$ and $b_h^k\in\mc{B}$ be the actual actions taken by the two agents.  Let $\ol{V}_h^k(s_h^k),V_h^k(s_h^k)$, and $N_h^k(s_h^k)$, respectively, be the values of $\ol{V}_h(s_h), V_h(s_h)$, and $N_h(s_h)$ in Algorithm~\ref{alg:qomd} calculated by agent 1 at the \emph{beginning} of the $k$-th episode. Symmetrically, define $\widetilde{V}_h^k(s_h^k)$ to be the value of $\ol{V}_h(s_h)$ calculated by agent 2, which does not necessarily take the same value as $\ol{V}_h^k(s_h^k)$.  For notational convenience, we often suppress  the sub/super-scripts $(h,k)$ when there is no possibility of any ambiguity. When the state $s_h^k$ is clear from the context, we also sometimes abbreviate $N_h^k(s_h^k)$ as $n_h^k$ or even simply as $t$. For a fixed state $s\in\mc{S}$, let $t = N_h^k(s)$, and suppose that $s$ was visited at episodes $k^1 < k^2 < \cdots < k^t$ at the $h$-th step before the $k$-th episodes. If we further define $\alpha_{t}^{0}\defeq\prod_{j=1}^{t}\left(1-\alpha_{j}\right)$ and $\alpha_{t}^{i}\defeq \alpha_{i} \prod_{j=i+1}^{t}\left(1-\alpha_{j}\right)$, one can show that the update rule in~\eqref{eqn:updaterule} can be equivalently expressed as
\begin{equation}\label{eqn:updaterulenew}
V_{h}^{k}(s)=\alpha_{t}^{0} (H-h+1)+\sum_{i=1}^{t} \alpha_{t}^{i}\left[r_{h}\left(s, a_{h}^{k^{i}}, b_{h}^{k^{i}}\right)+\ol{V}_{h+1}^{k^{i}}\left(s_{h+1}^{k^{i}}\right)+\beta_{i}\right]. 
\end{equation}
This update rule follows the standard optimistic Q-learning algorithm~\citep{jin2018q} in single-agent RL, and has also appeared in RL for two-player zero-sum games~\citep{bai2020near}. In the following lemma, we recall several properties of $\alpha_t^i$ that are useful in our analysis.

\begin{lemma}\label{lemma:alpha}
	(Properties for $\alpha_t^i$, Lemma 4.1 in~\cite{jin2018q}). 
	\begin{enumerate}
		\item $\sum_{i=1}^{t} \alpha_{t}^{i}=1 \text { and } \alpha_{t}^{0}=0 \text { for } t \geq 1.$
		\item $\sum_{i=1}^{t} \alpha_{t}^{i}=0 \text { and } \alpha_{t}^{0}=1 \text { for } t=0$. 
		\item $\frac{1}{\sqrt{t}} \leq \sum_{i=1}^{t} \frac{\alpha_{t}^{i}}{\sqrt{i}} \leq \frac{2}{\sqrt{t}} \text { for every } t \geq 1 $.
		\item $\max _{i \in[t]} \alpha_{t}^{i} \leq \frac{2 H}{t} \text { and } \sum_{i=1}^{t}\left(\alpha_{t}^{i}\right)^{2} \leq \frac{2 H}{t} \text { for every } t \geq 1$.
		\item $\sum_{t=i}^{\infty} \alpha_{t}^{i}=1+\frac{1}{H} \text { for every } i \geq 1 $. 
	\end{enumerate}
\end{lemma}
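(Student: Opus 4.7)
The plan is to derive all five properties directly from an explicit closed-form expression for the weights. Writing $1-\alpha_j = (j-1)/(H+j)$ and telescoping the product in the definition $\alpha_t^i = \alpha_i \prod_{j=i+1}^{t}(1-\alpha_j)$, one obtains
\[
\alpha_t^i = (H+1) \cdot \frac{(t-1)!\,(H+i-1)!}{(i-1)!\,(H+t)!},
\]
which will be the primary computational tool. I would also record the useful one-step recursion $\alpha_t^i = (1-\alpha_t)\,\alpha_{t-1}^i$ for $i < t$, together with $\alpha_t^t = \alpha_t$.

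For Properties 1 and 2, I would proceed by induction on $t$. Using the recursion, $\sum_{i=1}^t \alpha_t^i = \alpha_t + (1-\alpha_t)\sum_{i=1}^{t-1}\alpha_{t-1}^i = 1$ by the inductive hypothesis, with base case $\alpha_1^1 = 1$. Property 2 is then true by the standard convention that the empty product at $t=0$ equals $1$ and the empty sum is $0$.

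For Property 3, the lower bound follows immediately from $1/\sqrt{i}\geq 1/\sqrt{t}$ combined with Property 1. For the upper bound, I would again induct on $t$: writing $f(t) = \sum_{i=1}^t \alpha_t^i/\sqrt{i}$ and unpacking $f(t+1) = \alpha_{t+1}/\sqrt{t+1} + (1-\alpha_{t+1})f(t)$, the step $f(t)\leq 2/\sqrt{t}\Rightarrow f(t+1)\leq 2/\sqrt{t+1}$ reduces to the quadratic inequality $(H+2t+1)^2 \geq 4t(t+1)$, which is clear. For Property 4, the ratio $\alpha_t^i/\alpha_t^{i-1} = (H+i-1)/(i-1)\geq 1$ from the closed form shows that $i\mapsto \alpha_t^i$ is nondecreasing, so $\max_i \alpha_t^i = \alpha_t^t = (H+1)/(H+t) \leq 2H/t$, and the sum-of-squares bound drops out from $\sum_i (\alpha_t^i)^2 \leq (\max_i \alpha_t^i)\sum_i \alpha_t^i$.

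The main obstacle will be Property 5, where one must evaluate an infinite tail sum exactly rather than up to constants. My plan is to substitute the closed form, factor out the terms depending only on $i$, and reduce the problem to computing $\sum_{t=i}^\infty 1/[t(t+1)\cdots(t+H)]$. The key trick is the partial-fraction identity
\[
\frac{1}{t(t+1)\cdots(t+H)} = \frac{1}{H}\left[\frac{1}{t(t+1)\cdots(t+H-1)} - \frac{1}{(t+1)(t+2)\cdots(t+H)}\right],
\]
which makes the sum telescope to $\frac{1}{H}\cdot\frac{1}{i(i+1)\cdots(i+H-1)}$. Plugging this back and simplifying the factorial ratios yields the stated value $1 + 1/H$, closing the proof.
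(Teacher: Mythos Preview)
Your proposal is correct in all five parts: the closed form and one-step recursion are right, the inductions for Properties~1 and~3 go through (the reduction of the upper bound in Property~3 to $(H+2t+1)^2\geq 4t(t+1)$ is valid), the monotonicity argument for Property~4 is clean, and the telescoping computation for Property~5 yields exactly $1+1/H$.

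There is nothing to compare against, however: the paper does not prove this lemma at all but simply quotes it as Lemma~4.1 of \cite{jin2018q}. Your write-up therefore supplies a self-contained argument where the paper relies on an external reference, which is a strict addition rather than an alternative route.
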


\begin{algorithm*}[!tbp]
	\textbf{Require: }A common random seed shared by both agents. 
	
	\textbf{Input:} The strategy trajectories $\{(\mu_h^k, \nu_h^k)\}_{h=1,k=1}^{H,K}$ specified by Algorithm~\ref{alg:qomd}. 
	
	\For{step $h'\gets h$ to $H$}
	{
		Receive $s_{h'}$\;
		$t\gets N_{h'}^k(s_{h'})$\;
		Sample $m$ from $[t]$ with $\pp(m=i) = \alpha_t^i$ using the common random seed\;
		Let $k$ be the index of the episode in which $s_{h'}$ was visited for the $m$-th time during the execution of Algorithm~\ref{alg:qomd}\;
		Execute the strategy pair $(\mu_{h'}^k(\cdot \mid s_{h'}), \nu_{h'}^k(\cdot \mid s_{h'}) )$\; 
	}
	\caption{Construction of $(\bar{\mu}_h^k,\bar{\nu}_h^k)$ }\label{alg:correlated}
\end{algorithm*}

Based on the strategy trajectories $\{(\mu_h^k, \nu_h^k)\}_{h=1,k=1}^{H,K}$ of the two agents specified by Algorithm~\ref{alg:qomd}, we construct an auxiliary pair of correlated policies $(\bar{\mu}_h^k, \bar{\nu}_h^k)$ for each $(h,k)\in[H]\times [K]$. The construction of such correlated policies, largely inspired by the construction of the ``certified policies'' in~\citet{bai2020near}, is formally defined in Algorithm~\ref{alg:correlated}. Such auxiliary correlated policies will play a significant role throughout our analysis, and are closely related to the CCE correlated policy that we will construct later. In words, $(\bar{\mu}_h^k, \bar{\nu}_h^k)$ proceeds as follows: It first observes the current state $s_h$, and let $t = N_h^k(s_h)$. Then, it randomly samples an episode index $k^j$ from $\{ k^1,k^2,\dots,k^t \}$, the set of episodes in which the state $s_h$ was previously visited during the execution of the first $k$ episodes of Algorithm~\ref{alg:qomd}. Each index $k^i$ has a probability of $\alpha_t^i$ to be selected. It is easy to verify that $\sum_{i=1}^t \alpha_t^i = 1$, and hence we have specified a well-defined probability distribution over the episode index set. Finally, $(\bar{\mu}_h^k, \bar{\nu}_h^k)$ executes the sampled strategy $(\mu_{h}^k(\cdot \mid s_{h}), \nu_{h}^k(\cdot \mid s_{h}) )$ at step $h$, and then repeats a similar procedure using $(\bar{\mu}_{h+1}^{k^j}, \bar{\nu}_{h+1}^{k^j})$ at step $h+1$, and so on. 

From the collection of such auxiliary correlated policies $\{(\bar{\mu}_h^k, \bar{\nu}_h^k)\}_{h=1,k=1}^{H,K}$, we finally construct a correlated policy $(\bar{\mu},\bar{\nu})$, which we will show later is a CCE. A detailed description of the construction of   $(\bar{\mu},\bar{\nu})$ is presented in Algorithm~\ref{alg:certify}. By construction, $(\bar{\mu},\bar{\nu})$ first uniformly samples an index $k$ from $[K]$ using a common random seed, and then proceeds by following the auxiliary correlated policy $(\bar{\mu}_1^k,\bar{\nu}_1^k)$. One can see that the notations we have defined are related through the following equation: $V_1^{\bar{\mu},\bar{\nu}}(s_1) = \frac{1}{K}\sum_{k=1}^K V_1^{\bar{\mu}_1^k,\bar{\nu}_1^k}(s_1)$. We also remark that the common random seed used in Algorithms~\ref{alg:correlated} and~\ref{alg:certify} implicitly plays the role of the ``trusted coordinator'' typically used in the language of correlated equilibria.

For notational convenience, we further introduce the operator $\mathbb{P}_{h} V(s, a, b)= \mathbb{E}_{s^{\prime} \sim P_{h}(\cdot \mid s, a, b)} V\left(s^{\prime}\right)$ for any value function $V$, and $\mathbb{D}_{\mu_h\times\nu_h} Q(s)= \mathbb{E}_{(a, b) \sim (\mu_h\times\nu_h)} \allowbreak Q(s, a, b)$ for any strategy pair $(\mu_h,\nu_h)$ and any state-action value function $Q$. With these notations, for any $(s,a,b,h)\in \mc{S}\times \mc{A}\times \mc{B}\times [H]$ and for any policy pair $(\mu,\nu)$, the Bellman equations can be rewritten more succinctly as
$
Q_{h}^{\mu, \nu}(s, a, b)=\left(r_{h}+\mathbb{P}_{h} V_{h+1}^{\mu, \nu}\right)(s, a, b),$ and $V_{h}^{\mu, \nu}(s)=\left(\mathbb{D}_{\mu_{h} \times \nu_{h}} Q_{h}^{\mu, \nu}\right)(s). 
$
Recalling the definitions of the best responses, we further define $V_{k,H+1}^{\star,\bar{\nu}_{H+1}^k}(s) = 0,\forall k\in[K], s\in\mc{S}$. Then, we know that for each $(k,h,s)\in [K]\times[H]\times \mc{S}$, 
\begin{equation}\label{eqn:weighted}
V_{k,h}^{\star, \bar{\nu}_h^k}(s)\leq  \max_{\mu_h} \sum_{i=1}^t \alpha_t^i \mb{D}_{\mu_h\times \nu_h^{k^i}} \l r_h + \mb{P}_hV_{k^i,h+1}^{\star, \bar{\nu}_{h+1}^k}\r  (s),
\end{equation}
In what follows, we will simply write $V_{k,h}^{\star, \bar{\nu}_h^k}$ as $V_{k,h}^{\star, \bar{\nu}}$ for notational convenience, because the time step $(h,k)$ is always clear from the subscripts. We can also define $V_{k,h}^{\bar{\mu},\star }(s)$ analogously.

\begin{algorithm*}[!tbp]
	\textbf{Require: }A common random seed shared by both agents. 
	
	\textbf{Input:} The strategy trajectories $\{(\mu_h^k, \nu_h^k)\}_{h=1,k=1}^{H,K}$ specified by Algorithm~\ref{alg:qomd}. 
	
	Uniformly sample $k$ from $[K]$ using the common random seed. 
	
	\For{step $h\gets 1$ to $H$}
	{
		Receive $s_h$\;
		$t\gets N_h^k(s_h)$\;
		Sample $m$ from $[t]$ with $\pp(m=i) = \alpha_t^i$ using the common random seed\;
		Let $k$ be the index of the episode in which $s_h$ was visited for the $m$-th time during the execution of Algorithm~\ref{alg:qomd}\;
		Execute the strategy pair $(\mu_h^k(\cdot \mid s_h), \nu_h^k(\cdot \mid s_h) )$\; 
	}
	\caption{Construction of the Correlated Policy $(\bar{\mu},\bar{\nu})$}\label{alg:certify}
\end{algorithm*}

\begin{remark}
	Our definition of the correlated policy is inspired by the ``certified policies'' \citep{bai2020near} for learning in two-player zero-sum Markov games, but with additional challenges to address: In the zero-sum setting, the Nash equilibrium value is always unique, and the regret with respect to the equilibrium value can be easily defined a priori (by means of the ``duality gap''). But in general-sum games, the equilibrium value is not necessarily unique. We hence need to first specify an equilibrium before we are able to define the regret. In our analysis, the equilibrium value we choose is the one associated with the correlated policy $(\bar{\mu},\bar{\nu})$. In addition, we also emphasize that the correlated policy is only used for analytical purposes; the actual strategies adopted by the agents during the execution of Algorithm~\ref{alg:qomd} are still $\{(\mu_h^k, \nu_h^k)\}$. 
\end{remark}

We start with an intermediate result, which states that the optimistic $\ol{V}_h^k(s)$ and $\widetilde{V}_h^k(s)$ values are indeed high-probability upper bounds of $V_{k,h}^{\star, \bar{\nu}}(s)$ and $V_{k,h}^{\bar{\mu},\star }(s)$, respectively. The proof relies on a delicate investigation of a high-probability regret bound for OMD with a dynamic learning rate, which we will elaborate on in the next section. 

\begin{lemma}\label{lemma:upperbound}
	For any $p\in (0,1]$, let $\iota = \log(2S\max\{A,B\}T/p)$. It holds with probability at least $1-p$ that
	$\overline{V}_h^k(s)\geq V_{k,h}^{\star, \bar{\nu}}(s)$ and $\widetilde{V}_h^k(s)\geq V_{k,h}^{\bar{\mu},\star}(s)$,  for all $(s,h,k)\in \mc{S}\times [H]\times[K]$.
\end{lemma}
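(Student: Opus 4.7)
The plan is to prove both inequalities simultaneously by backward induction on $h$ from $H+1$ down to $1$, uniformly over $(k,s)\in[K]\times\mc{S}$, and then to take a union bound that promotes the result to a single high-probability event. By the symmetry of the two agents in Algorithm~\ref{alg:qomd}, it suffices to establish $\ol{V}_h^k(s)\geq V_{k,h}^{\star,\bar\nu}(s)$; the bound on $\widetilde{V}_h^k(s)$ follows by interchanging the roles of $\mu$ and $\nu$. The base case $h=H+1$ is immediate from the initialization $\ol{V}_{H+1}^k(s)=0$ and the convention $V_{k,H+1}^{\star,\bar\nu}(s)=0$ stated just before Equation~\eqref{eqn:weighted}.

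For the inductive step at $(k,h,s)$ with $t=N_h^k(s)$ and visit indices $k^1<\cdots<k^t$, I would expand $V_h^k(s)$ using the rewritten update rule~\eqref{eqn:updaterulenew}, drop the nonnegative $\alpha_t^0(H-h+1)$ contribution, and try to lower-bound the result by the right-hand side of~\eqref{eqn:weighted}. Three ingredients combine to do so. \textbf{(i)} By the inductive hypothesis, $\ol{V}_{h+1}^{k^i}(\cdot)\geq V_{k^i,h+1}^{\star,\bar\nu}(\cdot)$ pointwise, so it suffices to lower-bound $\sum_{i=1}^t\alpha_t^i\L r_h(s,a_h^{k^i},b_h^{k^i})+\ol{V}_{h+1}^{k^i}(s_{h+1}^{k^i})\R$ by $\max_{\mu_h\in\Delta(\mc{A})}\sum_{i=1}^t\alpha_t^i\mb{D}_{\mu_h\times\nu_h^{k^i}}\l r_h+\mb{P}_h\ol{V}_{h+1}^{k^i}\r(s)$, up to an error that the bonus must absorb. \textbf{(ii)} The transition residual $\ol{V}_{h+1}^{k^i}(s_{h+1}^{k^i})-(\mb{P}_h\ol{V}_{h+1}^{k^i})(s,a_h^{k^i},b_h^{k^i})$ and the action-sampling residual for agent $2$'s draws are bounded martingale differences with respect to the natural filtration of visits to $s$ at level $h$; the $\alpha_t^i$-weighted Azuma-Hoeffding inequality, combined with $\sum_i(\alpha_t^i)^2\leq 2H/t$ from Lemma~\ref{lemma:alpha}, controls each at scale $O(H\sqrt{H\iota/t})$. \textbf{(iii)} Conditioned on agent $2$'s draws, agent $1$'s OMD subroutine faces an adversarial bandit problem with weights $\alpha_t^i$ and losses $\ell_i(a)=H-r_h(s,a,b_h^{k^i})-(\mb{P}_h\ol{V}_{h+1}^{k^i})(s,a,b_h^{k^i})\in[0,H]$; the high-probability weighted-regret bound developed in Section~\ref{sec:omd} gives $\sum_{i=1}^t\alpha_t^i\inner{\mu_h^{k^i}(\cdot\mid s)-\mu^\star,\ell_i}\leq O(\sqrt{H^4 A\iota/t})$ uniformly in $\mu^\star\in\Delta(\mc{A})$, including the maximizer. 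Summing these three contributions and choosing the absolute constant $c$ in $\beta_t=c\sqrt{H^4 A\iota/t}$ large enough so that $\sum_i\alpha_t^i\beta_i=\Theta(\sqrt{H^4 A\iota/t})$ (by item~3 of Lemma~\ref{lemma:alpha}) absorbs the aggregated error, then clipping via $\ol{V}_h^k(s)=\min\{V_h^k(s),H-h+1\}$ together with $V_{k,h}^{\star,\bar\nu}(s)\leq H-h+1$, closes the induction. A union bound over $(s,h,k)$ and over the OMD/concentration events, absorbed into the $\log(S\max\{A,B\}T/p)$ factor inside $\iota$, lifts the argument to the stated uniform high-probability statement.

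The main obstacle is ingredient~(iii): establishing a high-probability regret bound for online mirror descent with a decaying learning rate $\eta_t\propto\sqrt{\log A/(At)}$, the specific bandit loss estimator used in Algorithm~\ref{alg:qomd}, and a \emph{weighted} regret $\sum_i\alpha_t^i\inner{\cdot,\ell_i}$ rather than the usual uniformly weighted one. Standard OMD analyses handle constant step sizes and in-expectation bounds, and the impossibility result of~\citet{orabona2018scale} shows that a naive time-varying $\eta_t$ can produce linear regret against the unbounded Bregman divergence $D_F$ used here. This is precisely why the stabilization step in Line~14 is required, with its mixing coefficient $\lambda_t$ tuned to the ratio of consecutive learning rates and the $\alpha_t^i$ weights; verifying that this stabilization retains the target $\sqrt{H^4 A\iota/t}$ scaling and admits a high-probability (rather than in-expectation) guarantee is the delicate piece, and is naturally deferred to Section~\ref{sec:omd}. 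Once that bandit regret bound is in hand, the inductive argument above is largely a matter of careful bookkeeping.
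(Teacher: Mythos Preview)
Your proposal is correct and follows essentially the same approach as the paper: backward induction on $h$, the weighted OMD regret bound of Section~\ref{sec:omd} to control the gap to the best fixed $\mu_h$, a weighted Azuma--Hoeffding step for the sampling noise, and absorption of the resulting $O(H^2\sqrt{A\iota/t})$ error into $\sum_i\alpha_t^i\beta_i$ via Lemma~\ref{lemma:alpha}. The only organizational difference is that the paper defines the bandit loss with expectation over both $b\sim\nu_h^{k^i}$ and the transition (so a single Azuma--Hoeffding in~\eqref{eqn:a4} handles all sampling noise at once), whereas you condition the loss on the realized $b_h^{k^i}$ and peel off the transition and agent-$2$ action residuals separately; both routes yield the same bound, though yours needs a small extra union bound over $a\in\mc{A}$ when controlling the agent-$2$ residual at the non-adapted comparator $\mu^\star$.
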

\begin{proof}
	In the following, we provide a proof for the first inequality. The second inequality can be shown using a similar argument. 
	
	Notice that it suffices to show $V_h^k(s)\geq V_{k,h}^{\star, \bar{\nu}}(s)$, because $\ol{V}_h^k(s) = \min\{V_h^k(s),H-h+1\}$, and $V_{k,h}^{\star, \bar{\nu}}(s) \leq H-h+1$ always holds. Our proof relies on backward induction on $h\in [H]$. First, the claim holds for $h = H+1$ by the definition of ${V}_{H+1}^k(s)$. Now, suppose	${V}_{h+1}^k(s)\geq V_{k,h+1}^{\star, \bar{\nu}}(s)$ for all $s\in\mc{S}$. By the definition of $V_{k,h}^{\star, \bar{\nu}}(s)$ and the induction hypothesis,
	\begin{align}
	V_{k,h}^{\star, \bar{\nu}}(s)\leq &\max_{\mu_h} \sum_{i=1}^t \alpha_t^i \mb{D}_{\mu_h\times \nu_h^{k^i}} \l r_h + \mb{P}_hV_{k^i,h+1}^{\star, \bar{\nu}}\r  (s)\nonumber\\
	\leq & \max_{\mu_h} \sum_{i=1}^t \alpha_t^i \mb{D}_{\mu_h\times \nu_h^{k^i}} \l r_h + \mb{P}_h \ol{V}_{h+1}^{k^i}\r  (s).\label{eqn:a3}
	\end{align}
	Further, define
	\begin{equation}\label{eqn:a2}
	R_t = \max _{\mu_h} \sum_{i=1}^{t} \alpha_{t}^{i} \mathbb{D}_{\mu_h \times \nu_{h}^{k^i}}\big(r_{h}+\mathbb{P}_{h} \ol{V}_{h+1}^{k^{i}}\big)(s)-\sum_{i=1}^{t} \alpha_{t}^{i} \mathbb{D}_{\mu_{h}^{k^i} \times \nu_{h}^{k^{i}}}\big(r_{h}+\mathbb{P}_{h} \ol{V}_{h+1}^{k^{i}}\big)(s).
	\end{equation}
	One may observe that from the perspective of player 1, $R_t$ is the weighted sum of the differences between the actual value that player 1 collected for the first $t$ times that state $s$ is visited, and the value that could have been achieved using the best fixed policy in hindsight. $R_t$ can hence be thought of as the weighted regret of an adversarial bandit problem, which we will formally present and analyze in Section~\ref{sec:omd}. Specifically, the loss function of the bandit problem is defined as 
	\[
	l_{i}(a)=\mathbb{E}_{b \sim \nu_{h}^{k^i}(s)}\left\{H-h+1-r_{h}(s, a, b)-\mathbb{P}_{h} \ol{V}_{h+1}^{k^{i}}(s, a, b)\right\}.
	\]
	The weight of the regret at round $i$ is $w_i = \alpha_t^i$. If we define 
	\[
	\mu_h^\star \defeq \arg\min_{\mu_h} \sum_{i=1}^t w_i \inner{\mu_h, l_i} = \arg\max_{\mu_h} \sum_{i=1}^{t} \alpha_{t}^{i} \mathbb{D}_{\mu_h \times \nu_{h}^{k^i}}\left(r_{h}+\mathbb{P}_{h} \ol{V}_{h+1}^{k^{i}}\right)(s),
	\]
	then, $R_t$ can be equivalently rewritten as 
	\[
	R_{t}=\sum_{i=1}^{t} w_{i}\left\langle\mu_{h}^{\star}-\mu_{h}^{k^{i}}, l_{i}\right\rangle.
	\]
	Later in Section~\ref{sec:omd}, we will analyze an adversarial bandit problem in exactly the same form. Applying the regret bound (which will be presented in Theorem~\ref{thm:omd_regret} of Section~\ref{sec:omd}) of this bandit problem, we obtain the following result with probability at least $1-p/(2SHK)$:
	\begin{align}
	R_{t} \leq& 2 H \alpha_{t}^{t} \sqrt{A t \iota}+\frac{3 H \sqrt{A \iota}}{2} \sum_{i=1}^{t} \frac{\alpha_{t}^{i}}{\sqrt{i}}+\frac{1}{2} H \alpha_{t}^{t}\iota +H \sqrt{2 \iota \sum_{i=1}^{t}\left(\alpha_{t}^{i}\right)^{2}}\nonumber \\
	\leq & 4 H^{2} \sqrt{A \iota / t}+3 H \sqrt{A\iota / t}+H^{2} \iota / t+\sqrt{4 H^{3} \iota / t}\nonumber\\
	\leq & 10H^2 \sqrt{A\iota /t},\label{eqn:a1}
	\end{align}
	where in the first step we have used the fact that $w_i$ is increasing and $\max_{i \leq t} w_i = \alpha_t^t$, and the second step is due to Lemma~\ref{lemma:alpha}.

	Finally, let $\mc{F}_i$ be the $\sigma$-algebra generated by all the random variables before episode $k^i$. Then, we can see that $\{ r_h(s,a_h^{k^i}, b_h^{k^i}) + \ol{V}_{h+1}^{k^i} (s_{h+1}^{k^i}) \}_{i=1}^t$ is a martingale with respect to $\{\mc{F}_i\}_{i=1}^t$. From the Azuma-Hoeffding inequality, it holds with probability at least $1-p/(2SHK)$ that
	\begin{equation}\label{eqn:a4}
	\begin{aligned}
	&\sum_{i=1}^{t} \alpha_{t}^{i} \mathbb{D}_{\mu_{h}^{k^i} \times \nu_{h}^{k^{i}}}\left(r_{h}+\mathbb{P}_{h} \ol{V}_{h+1}^{k^{i}}\right)(s)\\
	\leq& \sum_{i=1}^{t} \alpha_{t}^{i}\left[r_{h}\left(s, a_{h}^{k^{i}}, b_{h}^{k^{i}}\right)+\ol{V}_{h+1}^{k^{i}}\left(s_{h+1}^{k^{i}}\right)\right] + 2 \sqrt{H^{3} \iota / t}, 
	\end{aligned}
	\end{equation}
	where $\iota$ suppresses logarithmic terms. 
	Finally, combining the results in~\eqref{eqn:a3}, \eqref{eqn:a2}, \eqref{eqn:a1}, \eqref{eqn:a4}, and applying a union bound, we obtain that
	\[
	\begin{aligned}
	V_{k,h}^{\star, \bar{\nu} }(s) \leq&  \max_{\mu_h} \sum_{i=1}^t \alpha_t^i \mb{D}_{\mu_h\times \nu_h^{k^i}} \l r_h + \mb{P}_h \ol{V}_{h+1}^{k^i}\r  (s)\\
	\leq & \sum_{i=1}^{t} \alpha_{t}^{i} \mathbb{D}_{\mu_{h}^{k^i} \times \nu_{h}^{k^{i}}}\left(r_{h}+\mathbb{P}_{h} \ol{V}_{h+1}^{k^{i}}\right)(s) + 10 H^2 \sqrt{A\iota /t }\\
	\leq & \sum_{i=1}^{t} \alpha_{t}^{i}\left[r_{h}\left(s, a_{h}^{k^{i}}, b_{h}^{k^{i}}\right)+\ol{V}_{h+1}^{k^{i}}\left(s_{h+1}^{k^{i}}\right)\right] + 10 H^2 \sqrt{A\iota /t } + 2\sqrt{H^3\iota/t}\\
	\leq & \alpha_{t}^{0} H+\sum_{i=1}^{t} \alpha_{t}^{i}\left[r_{h}\left(s, a_{h}^{k^{i}}, b_{h}^{k^{i}}\right)+\ol{V}_{h+1}^{k^{i}}\left(s_{h+1}^{k^{i}}\right)+\beta_{i}\right]\\
	= & {V}_h^k(s),
	\end{aligned}
	\]
	where the second to last step is by the definition of $\beta_t = c\sqrt{\frac{H^4 A \iota}{t}}$ for some large constant $c$, and Lemma~\ref{lemma:alpha}. In the last step we used the formulation of ${V}_h^k(s)$ in \eqref{eqn:updaterulenew}. This completes the proof of the induction step. 
\end{proof}

By construction of the auxiliary correlated policies $(\bar{\mu}_h^k,\bar{\nu}_h^k)$, we know that for any $(s,h,k)\in\mc{S}\times [H]\times [K]$, the corresponding value function can be written recursively as follows:
\[
V_{k,h}^{\bar{\mu}, \bar{\nu}}(s) = \sum_{i=1}^{t} \alpha_{t}^{i} \mathbb{D}_{\mu_{h}^{k^i} \times \nu_{h}^{k^{i}}}\left(r_{h}+\mathbb{P}_{h} V_{k^i,h+1}^{\bar{\mu}, \bar{\nu}}\right)(s), 
\]
and $V_{k,H+1}^{\bar{\mu}, \bar{\nu}}(s) = 0$ for any $k\in[K],s\in\mc{S}$, where again notice that we have dropped the dependence on $(h,k)$. The following result shows that, on average, the agents have no incentive to deviate from the correlated policies, up to a regret term of the order $\widetilde{O}(\sqrt{H^6 SA/K})$. 

\begin{theorem}\label{thm:main}
	For any $p\in(0,1]$, let $\iota = \log(2S\max\{A,B\}T/p)$. With probability at least $1-p$, 
	$$
	\begin{aligned}
	&\frac{1}{K}\sum_{k=1}^K \l V_{k,1}^{\star, \bar{\nu}}(s_1) -  V_{k,1}^{\bar{\mu}, \bar{\nu}}(s_1)\r\leq O(\sqrt{H^6 S A \iota/K}),   \text{ and } \\
	&\frac{1}{K}\sum_{k=1}^K \l V_{k,1}^{\bar{\mu},\star}(s_1) - V_{k,1}^{\bar{\mu}, \bar{\nu}}(s_1)\r \leq  O(\sqrt{H^6 S B \iota/K}).
	\end{aligned}
	$$  
\end{theorem}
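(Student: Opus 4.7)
By the first inequality of Lemma~\ref{lemma:upperbound}, $V_{k,1}^{\star,\bar\nu}(s_1)\leq \ol V_1^k(s_1)$ holds for every $k\in[K]$ with probability at least $1-p$, so it suffices to bound the one-sided quantity $\frac{1}{K}\sum_{k=1}^K\l\ol V_1^k(s_1)-V_{k,1}^{\bar\mu,\bar\nu}(s_1)\r$. The second inequality in the theorem follows by a completely symmetric argument, replacing $\ol V$ by $\widetilde V$ and exchanging the roles of the two agents (in particular swapping $A$ with $B$), so I focus on the first.

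First I would define the per-episode gap $\delta_h^k \defeq (\ol V_h^k - V_{k,h}^{\bar\mu,\bar\nu})(s_h^k)$, and write $t_k=N_h^k(s_h^k)$ with $k^1<\cdots<k^{t_k}$ denoting the previous visits to $s_h^k$ at step $h$. Starting from~\eqref{eqn:updaterulenew} together with the recursive definition of $V_{k,h}^{\bar\mu,\bar\nu}$, using $\ol V_h^k\leq V_h^k$, and adding and subtracting the conditional expectation $\mb{D}_{\mu_h^{k^i}\times \nu_h^{k^i}}\l r_h+\mb{P}_h \ol V_{h+1}^{k^i}\r(s_h^k)$, I would obtain a per-episode recursion of the form
\[
\delta_h^k\leq \alpha_{t_k}^0 H+\sum_{i=1}^{t_k}\alpha_{t_k}^i\beta_i+\sum_{i=1}^{t_k}\alpha_{t_k}^i\delta_{h+1}^{k^i}+\mathrm{Mart}_h^k,
\]
where $\mathrm{Mart}_h^k$ collects two martingale-difference contributions: the gap between the empirical and the conditional mean of $r_h(s_h^k,a_h^{k^i},b_h^{k^i})+\ol V_{h+1}^{k^i}(s_{h+1}^{k^i})$ over the sampled actions, and the gap between $\ol V_{h+1}^{k^i}(s_{h+1}^{k^i})$ and its $\mb{P}_h$-conditional expectation.

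Next I would sum over $k=1,\dots,K$ and swap the double sum $\sum_k\sum_i\alpha_{t_k}^i\delta_{h+1}^{k^i}$ via the identity $\sum_{t=i}^{\infty}\alpha_t^i=1+\tfrac{1}{H}$ from Lemma~\ref{lemma:alpha}. Each $\delta_{h+1}^{k'}$ is thereby picked up at most $1+1/H$ times, giving
\[
\sum_{k=1}^K\delta_h^k\leq SH+\l 1+\tfrac{1}{H}\r\sum_{k=1}^K\delta_{h+1}^k+\sum_{k=1}^K\sum_{i=1}^{t_k}\alpha_{t_k}^i\beta_i+\sum_{k=1}^K \mathrm{Mart}_h^k.
\]
The $SH$ term absorbs all first-visit contributions $\alpha_{t_k}^0 H$ (there are at most $S$ of them per step). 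For the bonus total, property 3 of Lemma~\ref{lemma:alpha} gives $\sum_i \alpha_{t_k}^i/\sqrt{i}\leq 2/\sqrt{t_k}$, and the standard pigeonhole bound $\sum_k 1/\sqrt{t_k}\leq 2\sqrt{SK}$ then yields $O(\sqrt{H^4 SAK\iota})$. The noise total is $O(\sqrt{H^3 SK\iota})$ by Azuma--Hoeffding and property 4, which is dominated. Unrolling the recursion from $h=H$ down to $h=1$, the geometric factor $(1+1/H)^H\leq e$ keeps the accumulated error at $O(\sqrt{H^6 SAK\iota})$, and dividing by $K$ produces the advertised bound.

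The hard part will be ensuring that the martingale structure inside $\mathrm{Mart}_h^k$ is compatible with the subsequent summation swap: $\alpha_{t_k}^i$ depends on $t_k$, which is determined by all of episodes $1,\dots,k-1$, so Azuma--Hoeffding cannot be applied directly to a weighted sum with such data-dependent weights. The cleanest remedy is to first perform the summation swap, so that for each fixed $(s,h)$ the resulting weighted sum has weights $\alpha_t^i$ whose dependence is captured by the visit-count index $i$ alone, and only then apply Azuma--Hoeffding per $(s,h)$ and union-bound over $\mc{S}\times[H]$ at a logarithmic cost absorbed into $\iota$. Once this measurability issue is handled, the remaining manipulations are routine and closely follow the single-agent Q-learning template of~\citet{jin2018q} together with its zero-sum extension in~\citet{bai2020near}.
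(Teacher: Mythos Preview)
Your plan is sound and will yield the stated bound, but the paper takes a different decomposition that sidesteps exactly the measurability headache you single out. Rather than setting $\delta_h^k = \ol{V}_h^k(s_h^k) - V_{k,h}^{\bar\mu,\bar\nu}(s_h^k)$ and carrying martingale terms $\mathrm{Mart}_h^k$ through the recursion, the paper introduces an auxiliary \emph{empirical} lower bound $\ul{V}_h^k$ (and $\ut{V}_h^k$) defined from the very same samples as $\ol{V}_h^k$ but with $-\beta_i$ instead of $+\beta_i$. One backward induction, using a single Azuma--Hoeffding inequality absorbed into $\beta_i$, gives $\ul{V}_h^k(s)\leq V_{k,h}^{\bar\mu,\bar\nu}(s)$. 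After that, the recursion is run on $\delta_h^k=\ol{V}_h^k(s_h^k)-\ul{V}_h^k(s_h^k)$, and because both sides are built from identical realized samples the difference satisfies the \emph{purely deterministic} inequality
\[
\delta_h^k \leq \alpha_{t}^0 H + \sum_{i=1}^{t}\alpha_t^i\,\delta_{h+1}^{k^i} + 2\sum_{i=1}^{t}\alpha_t^i\beta_i,
\]
with no stochastic residual at all. The swap via $\sum_{t\geq i}\alpha_t^i=1+1/H$, the $\sqrt{SK}$ pigeonhole on $\sum_k 1/\sqrt{t_k}$, and the unrolling then go through exactly as you describe.

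What each route buys: your direct comparison avoids inventing $\ul{V}$ and stays closer to the single-agent template of \citet{jin2018q}, but you must justify Azuma--Hoeffding for sums whose weights $\alpha_{t_k}^i$ are data-dependent (the standard fix is to freeze $t$ and union-bound over $t\in[K]$ and $(s,h)$, which fits inside $\iota$; your ``swap first'' phrasing is slightly imprecise because the post-swap weight on the $i$-th increment still depends on the random final visit count). The paper's auxiliary-lower-bound device pays for one extra induction up front but then makes the recursion and summation entirely concentration-free, which is why no $\mathrm{Mart}_h^k$ term appears anywhere in their display~\eqref{eqn:delta}.
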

\begin{proof}
	We provide a proof for the first bound. The second one can be shown using a similar argument. For analytical purposes, we introduce two new notations $\underline{V}$ and $\ut{V}$ that serve as lower confidence bounds of the value estimates for agent 1. Specifically, for any $(s,h,k)\in\mc{S}\times [H+1]\times [K]$, we define $\ul{V}_h^k(s) = \ut{V}_h^k(s) = 0$ if $h=H+1$ or the $(h,s)$ pair has not been visited before episode $k$, and otherwise define 
	\[
	\ut{V}_{h}^{k}(s)=\sum_{i=1}^{t} \alpha_{t}^{i}\left[r_{h}\left(s, a_{h}^{k^{i}}, b_{h}^{k^{i}}\right)+\ul{V}_{h+1}^{k^{i}}\left(s_{h+1}^{k^{i}}\right)-\beta_{i}\right],\text{ and } \ul{V}_h^k(s) = \max\{\ut{V}_h^k(s), 0\}.
	\]
	Notice that these two notations are only introduced for ease of analysis, and the agent does not need to explicitly maintain such values during the learning process. In the following, we show that $\ul{V}_h^k(s) \leq V_{k,h}^{\bar{\mu},\bar{\nu}}(s)$, for all $(s,h,k)\in\mc{S}\times[H]\times[K]$. Again, it suffices to show that $\ut{V}_h^k(s) \leq V_{k,h}^{\bar{\mu},\bar{\nu}}(s)$, because $\ul{V}_h^k(s) = \max\{\ut{V}_h^k(s), 0\}$, and $V_{k,h}^{\bar{\mu},\bar{\nu}}(s) \geq 0$ always holds. Our proof relies on backward induction on $h\in[H]$. The claim trivially holds for $h=H+1$. Suppose $\ut{V}_{h+1}^k(s) \leq V_{k,h+1}^{\bar{\mu},\bar{\nu}}(s)$ for all $s\in\mc{S}$. By the definition of $\ut{V}_h^k(s)$,
	\[
	\begin{aligned}
	\ut{V}_{h}^{k}(s)=&\sum_{i=1}^{t} \alpha_{t}^{i}\left[r_{h}\left(s, a_{h}^{k^{i}}, b_{h}^{k^{i}}\right)+\ul{V}_{h+1}^{k^{i}}\left(s_{h+1}^{k^{i}}\right)-\beta_{i}\right]\\
	\leq & \sum_{i=1}^{t} \alpha_{t}^{i}  \mathbb{D}_{\mu_{h}^{k^i} \times \nu_{h}^{k^{i}}}\left(r_{h}+\mathbb{P}_{h} \ul{V}_{h+1}^{k^{i}}\right)(s)\\
	\leq & \sum_{i=1}^{t} \alpha_{t}^{i}  \mathbb{D}_{\mu_{h}^{k^i} \times \nu_{h}^{k^{i}}}\left(r_{h}+\mathbb{P}_{h} V_{k^i,h+1}^{\bar{\mu},\bar{\nu}}\right)(s)\\
	=& V_{k,h}^{\bar{\mu},\bar{\nu}}(s).
	\end{aligned}
	\]
	where the second step uses the Azuma-Hoeffding inequality and the definition of $\beta_i$, and the third step is by the induction hypothesis. This completes the proof of the induction. 
	
	Together with Lemma~\ref{lemma:upperbound}, we know that 
	$$
	\sum_{k=1}^K \l V_{k,1}^{\star, \bar{\nu}}(s_1) -  V_{k,1}^{\bar{\mu}, \bar{\nu}}(s_1)\r \leq \sum_{k=1}^K \l \ol{V}_1^{k}(s_1) -  \ul{V}_1^{k}(s_1)\r,
	$$
	and so we only need to find an upper bound for the RHS. Define $\delta_h^k \defeq \overline{V}_h^k(s_h^k)- \ul{V}_h^k(s_h^k)$. The main idea of the proof is similar to optimistic Q-learning in the single-agent setting~\citep{jin2018q}: We seek to upper bound $\sum_{k=1}^K \delta_h^k$ by the next step $\sum_{k=1}^K \delta_{h+1}^k$, and then obtain a recursive formula. 
	
	By the definitions of $\overline{V}_h^k(s_h^k)$ and $\ul{V}_h^k(s_h^k)$, we know that
	\[
	\begin{aligned}
	\delta_h^k = &\overline{V}_h^k(s_h^k)- \ul{V}_h^k(s_h^k)\\
	\leq & \alpha_{t}^{0} H+\sum_{i=1}^{t} \alpha_{t}^{i}\left[\ol{V}_{h+1}^{k^{i}}\left(s_{h+1}^{k^{i}}\right)- \ul{V}_{h+1}^{k^{i}}\left(s_{h+1}^{k^{i}}\right) +2\beta_{i}\right]\\
	= & \alpha_{t}^{0} H+\sum_{i=1}^{t} \alpha_{t}^{i} \delta_{h+1}^{k^i} + 2\sum_{i=1}^{t} \alpha_{t}^{i}\beta_{i}\\
	\leq &  \alpha_{t}^{0} H+\sum_{i=1}^{t} \alpha_{t}^{i} \delta_{h+1}^{k^i} + c \sqrt{AH^4 \iota/t},
	\end{aligned}
	\] 
	for some constant $c$, and the last step is due to Lemma~\ref{lemma:alpha}. Summing over $k$, notice that 
	\[
	\sum_{k=1}^{K} \alpha_{n_{h}^{k}}^{0} H=\sum_{k=1}^{K} H \indicator\left\{n_{h}^{k}=0\right\} \leq HS,
	\]
	because there are at most $SH$ pairs of $(s,h)$ to be visited. Further,
	\[
	\begin{aligned}
	\sum_{k=1}^{K} \sum_{i=1}^{n_{h}^{k}} \alpha_{n_{h}^{k}}^{i} \delta_{h+1}^{k_{h}^{i}\left(s_{h}^{k}\right)} \leq & \sum_{k^{\prime}=1}^{K} \delta_{h+1}^{k^{\prime}} \sum_{i=n_{h}^{k^{\prime}}+1}^{\infty} \alpha_{i}^{n_{h}^{k^{\prime}}}\\
	\leq& \left(1+\frac{1}{H}\right) \sum_{k=1}^{K} \delta_{h+1}^{k},
	\end{aligned}
	\]
	where the first step is by switching the order of summation, and the second uses the fact that $\sum_{t=i}^{\infty} \alpha_{t}^{i}=1+\frac{1}{H} \text { for every } i \geq 1$ from Lemma~\ref{lemma:alpha}. Therefore,
	\begin{align}
	\sum_{k=1}^{K}\delta_h^k \leq& \sum_{k=1}^{K} \alpha_{n_{h}^{k}}^{0} H + \sum_{k=1}^{K} \sum_{i=1}^{n_{h}^{k}} \alpha_{n_{h}^{k}}^{i} \delta_{h+1}^{k_{h}^{i}\left(s_{h}^{k}\right)} + \sum_{k=1}^{K}c \sqrt{AH^4 \iota/n_h^k}\nonumber\\
	\leq & HS +  \left(1+\frac{1}{H}\right) \sum_{k=1}^{K} \delta_{h+1}^{k} + \sum_{k=1}^{K}c \sqrt{AH^4 \iota/n_h^k}\label{eqn:delta}. 
	\end{align}
	Applying this formula recursively for $h = H, H-1, \dots, 1$ yields
	\[
	\begin{aligned}
	\sum_{k=1}^{K}\delta_1^k \leq eSH^2 + ec \sum_{h=1}^H \sum_{k=1}^{K} \sqrt{AH^4 \iota/n_h^k},
	\end{aligned}
	\]
	where we used the fact that $(1+\frac{1}{H})^H \leq e$. Finally, for any $h\in [H]$, 
	\[
	\sum_{k=1}^{K} \sqrt{AH^4 \iota/n_h^k} = \sum_{s\in\mc{S}} \sum_{n=1}^{N_h^K(s)} \sqrt{AH^4 \iota/n} \leq O(\sqrt{H^4SAK\iota}),
	\]
	where the last step holds because $\sum_{s\in\mc{S}} N_h^K(s) = K$, and the LHS is maximized when $N_h^K(s) = K/S$ for all $s\in\mc{S}$. Summarizing the results above leads to the desired bound
	\[
	\sum_{k=1}^K \l V_{k,1}^{\star, \bar{\nu}}(s_1) -  V_{k,1}^{\bar{\mu}, \bar{\nu}}(s_1)\r \leq \sum_{k=1}^{K}\delta_1^k \leq O(\sqrt{H^6SAK\iota}).
	\]
\end{proof}

From the relationship between $(\bar{\mu},\bar{\nu})$ and $(\bar{\mu}_1^k,\bar{\nu}_1^k)$, and that $V_1^{\bar{\mu},\bar{\nu}}(s_1) = \frac{1}{K}\sum_{k=1}^K V_1^{\bar{\mu}_1^k,\bar{\nu}_1^k}(s_1)$, we can immediately conclude from Theorem~\ref{thm:main} that the correlated policy $(\bar{\mu},\bar{\nu})$ constitutes an approximate CCE.

\begin{corollary}\label{corollary}
	(Sample complexity of V-learning OMD). For any $p\in (0,1]$, set $\iota = \log(2S\max\{A,B\}T/p)$, and let the two agents run Algorithm~\ref{alg:qomd} for $K$ episodes with $K= \Omega(H^6 S \max\{A,B\}\iota/\epsilon^2)$. Then, with probability at least $1-p$, the two agents can obtain an $\epsilon$-approximate coarse correlated equilibrium using a common random seed. 
\end{corollary}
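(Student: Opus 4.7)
The plan is to derive the corollary as a direct consequence of Theorem~\ref{thm:main} combined with the construction of the correlated policy $(\bar{\mu},\bar{\nu})$ in Algorithm~\ref{alg:certify}. Concretely, I would verify Definition~\ref{def:approxCCE} for the policy $(\bar{\mu},\bar{\nu})$ by translating the two weighted-average inequalities in Theorem~\ref{thm:main} into single-shot inequalities on $V_1^{\bar{\mu},\bar{\nu}}(s_1)$ versus its best-response values.

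First, I would use the already-noted identity $V_1^{\bar{\mu},\bar{\nu}}(s_1) = \frac{1}{K}\sum_{k=1}^K V_{k,1}^{\bar{\mu},\bar{\nu}}(s_1)$, which follows immediately from the fact that Algorithm~\ref{alg:certify} samples $k\in[K]$ uniformly via the shared random seed and then runs $(\bar{\mu}_1^k,\bar{\nu}_1^k)$. Next, for the best-response side, I would observe that any general policy $\mu$ for the deviating agent 1 against $\bar{\nu}$ produces a value that decomposes as $V_1^{\mu,\bar{\nu}}(s_1) = \frac{1}{K}\sum_{k=1}^K V_1^{\mu,\bar{\nu}_1^k}(s_1)$ (since $\bar{\nu}$ is the $k$-uniform mixture), and by subadditivity of the supremum,
\[
V_1^{\bar{\mu}^\star,\bar{\nu}}(s_1) \;=\; \sup_{\mu}\frac{1}{K}\sum_{k=1}^K V_1^{\mu,\bar{\nu}_1^k}(s_1) \;\leq\; \frac{1}{K}\sum_{k=1}^K V_{k,1}^{\star,\bar{\nu}}(s_1).
\]
The symmetric relation holds for agent 2 with $\bar{\mu}$ replacing $\bar{\nu}$ and $B$ replacing $A$.

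Combining these two displays with the two bounds from Theorem~\ref{thm:main}, I obtain, with probability at least $1-p$,
\[
V_1^{\bar{\mu}^\star,\bar{\nu}}(s_1) - V_1^{\bar{\mu},\bar{\nu}}(s_1) \;\leq\; O\!\l\sqrt{H^6 S A \iota /K}\r,
\]
and analogously $V_1^{\bar{\mu},\bar{\nu}^\star}(s_1) - V_1^{\bar{\mu},\bar{\nu}}(s_1) \leq O(\sqrt{H^6 S B \iota /K})$. Setting $K = \Omega(H^6 S \max\{A,B\}\iota/\epsilon^2)$ with a sufficiently large hidden constant makes both right-hand sides at most $\epsilon$, which is precisely the condition in Definition~\ref{def:approxCCE}.

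Since Theorem~\ref{thm:main} has already done the heavy lifting (the regret decomposition, the OMD high-probability bound, and the optimistic/lower-confidence sandwich via $\overline{V}$ and $\ul{V}$), the only real step here is bookkeeping: verifying that the two natural identities relating $V_1^{\bar{\mu},\bar{\nu}}$ and $V_1^{\bar{\mu}^\star,\bar{\nu}}$ to the per-episode quantities $V_{k,1}^{\bar{\mu},\bar{\nu}}$ and $V_{k,1}^{\star,\bar{\nu}}$ are correctly interpreted under the common-random-seed convention. I expect no genuine obstacle — the main care point is just to make explicit that subadditivity of the supremum suffices, so one does not need the deviator to actually observe the seed in order for the CCE inequality to hold.
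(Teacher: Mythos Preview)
Your proposal is correct and follows essentially the same approach as the paper. The paper does not provide a detailed proof for this corollary at all; it simply states that from the identity $V_1^{\bar{\mu},\bar{\nu}}(s_1) = \frac{1}{K}\sum_{k=1}^K V_1^{\bar{\mu}_1^k,\bar{\nu}_1^k}(s_1)$ and ``the relationship between $(\bar{\mu},\bar{\nu})$ and $(\bar{\mu}_1^k,\bar{\nu}_1^k)$'' the corollary follows immediately from Theorem~\ref{thm:main}. Your write-up makes this implicit step explicit---in particular the subadditivity-of-the-supremum observation $\sup_{\mu}\frac{1}{K}\sum_k V_1^{\mu,\bar{\nu}_1^k}(s_1)\le \frac{1}{K}\sum_k V_{k,1}^{\star,\bar{\nu}}(s_1)$---which is exactly the missing bookkeeping the paper glosses over.
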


Finally, to obtain a sample complexity lower bound for the problem, one simple way is to consider a Markov game instance where either $\mc{A}$ or $\mc{B}$ is a singleton, i.e., $A = 1$ or $B = 1$. In this case, there is no need to correlate the actions of the agents, and hence a CCE in such a game reduces to a NE. In addition, learning a NE against an opponent with a fixed policy is equivalent to learning an optimal policy in a fixed environment. Hence, we have reduced the problem of learning a CEE in a Markov game to a single-agent RL problem either for agent 2 or for agent 1. Applying the regret lower bound of single-agent RL yields the following result for RL in Markov games.
\begin{corollary}\label{corollary2} (Corollary of Theorem 5 in~\citet{jaksch2010near}). 
	For any algorithm, the sample complexity on achieving an $\epsilon$-approximate CCE in two-player general-sum Markov games is at least $\Omega(H^3 S\max\{A,B\}/\epsilon^2)$. 
\end{corollary}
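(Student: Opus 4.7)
The plan is to prove Corollary~\ref{corollary2} by a reduction to the single-agent episodic RL lower bound. The key observation is that the multi-agent hardness is already present in the degenerate case where one of the two players is trivial: if we can force a CCE-learner to do as much work as a single-agent RL learner, we inherit the latter's lower bound for free.

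First I would construct two families of hard instances. For the $A$-dependent bound, consider any Markov game with $|\mc{B}|=1$, so that player~2's action $b_0$ is forced; the reward $r^1_h(s,a,b_0)$ and transition $P_h(\cdot\mid s,a,b_0)$ then define a standard single-agent episodic MDP for player~1 with state space $\mc{S}$, action space $\mc{A}$, and horizon $H$. Symmetrically, for the $B$-dependent bound, take $|\mc{A}|=1$.

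Next I would verify that the solution concepts collapse as promised. Since $|\mc{B}|=1$, any correlated policy $\sigma$ has a trivial marginal on player~2 (namely the Dirac at $b_0$), so the correlation structure is vacuous and $\sigma$ reduces to a Markov policy $\pi^1$ for player~1, with $V^{2,\sigma} = V^{2,(\sigma^{2\star},\sigma^{-2})}$ automatic. The CCE inequality in Definition~\ref{def:approxCCE} for player~1 therefore becomes exactly
\[
V_1^{\pi^1}(s_1) \geq \sup_{\pi^{1\prime}} V_1^{\pi^{1\prime}}(s_1) - \epsilon,
\]
i.e., $\pi^1$ is an $\epsilon$-optimal policy of the induced single-agent MDP. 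Hence any algorithm that returns an $\epsilon$-CCE in such a game with probability $\geq 1-p$ using $K$ episodes yields an algorithm that PAC-learns an $\epsilon$-optimal policy in an arbitrary episodic MDP with $S$ states, $A$ actions, and horizon $H$ using the same $K$ episodes.

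Finally, I would invoke the episodic-MDP lower bound derived from Theorem~5 of~\citet{jaksch2010near}: it yields an MDP instance on which any learner needs $\Omega(H^3 SA/\epsilon^2)$ episodes to output an $\epsilon$-optimal policy with constant probability (this is the standard episodic translation of the $\Omega(\sqrt{DSAT})$ regret lower bound with $D=\Theta(H)$). Combined with the reduction above, this gives a $\Omega(H^3 SA/\epsilon^2)$ lower bound for $\epsilon$-CCE learning in general-sum Markov games, and the symmetric construction with $|\mc{A}|=1$ gives $\Omega(H^3 SB/\epsilon^2)$; taking the larger of the two yields the claimed $\Omega(H^3 S\max\{A,B\}/\epsilon^2)$ bound. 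The main (and only mildly subtle) step is the equilibrium-reduction check, because one must confirm that the CCE notion truly degenerates to optimality when the opponent is inert; the rest is a direct appeal to a known result, so no novel analysis is required.
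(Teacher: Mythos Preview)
Your proposal is correct and follows essentially the same approach as the paper: reduce to the single-agent case by taking one player's action space to be a singleton, observe that the $\epsilon$-CCE condition then collapses to $\epsilon$-optimality in the induced MDP, and invoke the episodic lower bound from \citet{jaksch2010near}. Your write-up is slightly more explicit about why the CCE condition degenerates when the opponent is inert, but the argument is the same.
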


Comparing Corollaries \ref{corollary} and \ref{corollary2}, we see that the sample complexity of Algorithm~\ref{alg:qomd} matches the information-theoretical lower bound in terms of the dependences on $S,A,B$ and $\epsilon$, leaving a gap only in the dependence of $H$.  Notably, the tight dependence on $\max\{A,B\}$ is a natural benefit from decentralized learning, which would not have been achieved by centralized approaches.

\section{Adversarial Bandits with Weighted Regret}\label{sec:omd}
In this section, we close the gap in the proof of Lemma~\ref{lemma:upperbound} by formally presenting a bandit regret bound that we used in \eqref{eqn:a1}. Specifically, we consider an adversarial multi-armed bandit problem, and propose an online mirror descent based algorithm for this problem, which also serves as an important subroutine in Algorithm~\ref{alg:qomd}. Our OMD algorithm achieves an anytime high probability bound with respect to a weighted definition of regret. Such a result complements the Follow the Regularized Leader based algorithm in~\citet{bai2020near} and might be of independent interest.

Specifically, we consider an $A$-armed bandit problem, i.e., the action space is $\mc{A} = \{1,2,\dots,A\}$. The arms are associated with an adversarial sequence of loss vectors $(l_t)_{t=1}^T$, where $l_t \in [0,1]^A$. The bandit proceeds for $T$ rounds. At each round $t$, the player specifies a distribution $\theta_t\in\Delta(\mc{A})$ over the actions, and takes an action $a_t$ sampled from this distribution. We consider bandit feedback, where the player only observes the loss associated with the chosen action $l_t(a_t)$. The player's objective is to minimize the weighted regret with respect to the best fixed policy in hindsight for any time step $t\in[T]$:
\[
\text{Reg}_t(\theta^\star)  \defeq \sum_{i=1}^{t} w_{i} \mathbb{E}_{a \sim \theta^{\star}}\left[l_{i}(a_i)-l_{i}\left(a\right) \mid \mathcal{F}_{i}\right]=\sum_{i=1}^{t} w_{i}\left\langle\theta_{i}-\theta^{*}, l_{i}\right\rangle, 
\]
where  $\theta^\star \in \Delta(\mc{A})$ is an arbitrary but fixed policy, $0 \leq w_i \leq 1$ is the weight of the regret for round $i$, and $\mc{F}_{i}$ is the $\sigma$-algebra generated by the events up to and including round $i-1$. We can check that such a problem formulation indeed captures the adversarial bandit subroutine with weighted regret used in the analysis of Lemma~\ref{lemma:upperbound}.

\begin{algorithm*}[!t]
	\textbf{Input:} The weight of the regret $w_t\in[0,1]$ for each round $t$.
	
	\textbf{Initialize:} ${\theta}_{1}\gets \mathbf{1}/A \defeq (\frac{1}{A}, \dots, \frac{1}{A})$. 
	
	\For{$t\gets 1$ to $T$}
	{
		Take action $a_t\sim \theta_t$, and observe loss $\tilde{l}_t(a_t)$\;
		$\hat{l}_{t}(a) \leftarrow \tilde{l}_{t}(a) \indicator\left\{a_{t}=a\right\} /\left(\theta_{t}(a)+\gamma_{t}\right)$ for all $a \in \mathcal{A}$, where $\gamma_{t}=\sqrt{\frac{\log A}{A t}}$\;
		$\tilde{\theta}_{t+1} \gets \arg\min_{\theta \in \mc{D}} \left\{ \eta_t \inner{\theta,  \hat{l}_t} + D_F(\theta, \theta_t)\right\}$, where $\eta_t \gets \sqrt{\frac{\log A}{At}}$\;
		$\theta_{t+1}'\gets \arg\min_{\theta\in \Delta(\mc{A})} D_F(\theta, \tilde{\theta}_{t+1})$\;
		$\theta_{t+1}\gets \beta_t \theta_{t+1}' + (1-\beta_t) \theta_1$, where $\beta_t \gets \frac{\eta_{t+1}w_t}{\eta_t w_{t+1}}$\;
	}
	\caption{Stabilized Online Mirror Descent with Weighted Regret}\label{alg:omd}
\end{algorithm*}

We present our OMD-based algorithm in Algorithm~\ref{alg:omd}. We again use the unnormalized negentropy regularizer $F(\theta) = \sum_{a=1}^A (\theta(a) \log(\theta(a)) - \theta(a))$ with domain $\mc{D} = \text{dom}(F)$. Direct calculation shows that the Bregman divergence with respect to $F$ is 
$$
D_F(u,v)= F(u) - F(v) - \inner{u-v, \grad F(v)} = \sum_{a=1}^A u(a) \log(u(a)/v(a)), 
$$
which coincides with the Kullback–Leibler divergence when $u$ and $v$ are defined on the simplex. 

The structure of Algorithm~\ref{alg:omd} essentially follows the well-developed OMD framework, but with the following two critical refinements in order to achieve an anytime high-probability regret bound: First, to establish high-probability regret guarantees, we use an \emph{implicit exploration} technique \citep{neu2015explore}, and deliberately maintain a biased estimate of the true losses as
\[
\hat{l}_{t}(a) \leftarrow \frac{\tilde{l}_{t}(a)}{\theta_{t}(a)+\gamma_{t}} \indicator\left\{a_{t}=a\right\}.
\]
We can show (in Lemma~\ref{lemma:highprob} below) that with an appropriately chosen $\gamma_t > 0$, loss estimates of this form constitute a lower confidence bound of the true losses, and hence are critical in establishing high-probability regret guarantees of the bandit problem. Second, to achieve an anytime regret bound, we use a stabilization technique~\citep{fang2020online} by replacing the policy at each step with a convex combination of this policy and the initial policy (Line 8). It has been shown in \citet{orabona2018scale} that standard OMD with an $\eta_t \propto \sqrt{1/t}$ learning rate can incur linear regret when the Bregman divergence is unbounded. To resolve this unboundedness, the stabilization technique mixes a small fraction of $\theta_1$ into each iterate $\theta_t$. In this sense, every iterate $\theta_t$ remains somewhat close (with respect to the Bregman divergence) to the point $\theta_1$. Since the distance between $\theta_1$ and any other point in $\Delta(\mc{A})$ is small (due to our initialization of $\theta_1$), we know that each iterate $\theta_t$ is also not too far from all the other points in $\Delta(\mc{A})$. This hence ensures that the Bregman divergences involved with the iterates are always bounded. In the original stabilization technique \citep{fang2020online}, a $(1-\frac{\eta_{t+1}}{\eta_t})$-fraction of $\theta_1$ is mixed into each iterate $\theta_t$; while in our algorithm, this fraction is set to $1-\frac{\eta_{t+1}w_t}{\eta_t w_{t+1}}$ because we need to additionally address the weighted regret. The following theorem presents the regret guarantee of Algorithm~\ref{alg:omd}.

\begin{theorem}\label{thm:omd_regret}
	For any $p\in(0,1]$, let $\iota = \log(p/AT)$. For any $t\in[T]$, suppose  $\eta_i \leq 2\gamma_i$, $0\leq w_i\leq 1$, $\beta_i\in (0,1],\forall   i \in [t]$, and $\gamma_i$ is non-increasing in $i$. Then, with probability at least $1-3p$, the weighted regret of Algorithm~\ref{alg:omd} is upper bounded by:
	\[
	\text{Reg}_t(\theta^\star) \leq 2 \max _{i \leq t} w_{i} \sqrt{A t\iota}+\frac{3 \sqrt{A \iota}}{2} \sum_{i=1}^{t} \frac{w_{i}}{\sqrt{i}}+\frac{1}{2} \max _{i \leq t} w_{i} \iota+\sqrt{2 \iota \sum_{i=1}^{t} w_{i}^{2}}.
	\] 
\end{theorem}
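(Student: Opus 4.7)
The plan is to split the weighted regret into an estimation-error part, measuring how close the biased estimates $\hat l_i$ are to the true losses $l_i$, and an optimization-error part, measuring the regret of the OMD iterates against the estimates. Concretely, I would write
\[
\text{Reg}_t(\theta^\star)=\sum_{i=1}^t w_i\langle\theta_i,l_i-\hat l_i\rangle+\sum_{i=1}^t w_i\langle\theta^\star,\hat l_i-l_i\rangle+\sum_{i=1}^t w_i\langle\theta_i-\theta^\star,\hat l_i\rangle,
\]
and bound the three sums separately. The first is a bounded weighted martingale difference (since $\mathbb{E}[\hat l_i(a)\mid\mathcal{F}_i]\le l_i(a)$ and each summand lies in a bounded range after a crude truncation), so Azuma--Hoeffding produces the $\sqrt{2\iota\sum_i w_i^2}$ term. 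The second requires a weighted version of the implicit-exploration concentration lemma of Neu (2015): using the hypothesis $\eta_i\le 2\gamma_i$, one builds an exponential supermartingale arm-by-arm from the bias-corrected quantity $\sum_{i\le t} w_i\bigl(\hat l_i(a)-l_i(a)\bigr)$, and a Cram\'er--Chernoff argument with a union bound over the $A$ arms (absorbed into $\iota$) produces the $\tfrac12\max_iw_i\iota$ term.

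For the optimization error, I would invoke the three-point identity for Bregman divergences combined with the generalized Pythagorean inequality at the projection step $\theta'_{i+1}=\arg\min_{\theta\in\Delta(\mathcal{A})}D_F(\theta,\tilde\theta_{i+1})$ to obtain the per-round bound
\[
\eta_i\langle\theta_i-\theta^\star,\hat l_i\rangle \le D_F(\theta^\star,\theta_i)-D_F(\theta^\star,\theta'_{i+1})+\eta_i^2\sum_{a}\theta_i(a)\hat l_i(a)^2.
\]
For the unnormalized negentropy and the Neu-style estimator, the stability term $\sum_a\theta_i(a)\hat l_i(a)^2$ is bounded, up to lower-order fluctuations controlled by the same implicit-exploration concentration, by $A$. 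Multiplying by $w_i/\eta_i$ and using $\eta_i=\sqrt{\log A/(Ai)}$, the stability contribution sums to $O(\sqrt{A\iota}\sum_i w_i/\sqrt{i})$, matching the third term of the theorem.

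The main obstacle is turning the remaining divergence differences $D_F(\theta^\star,\theta_i)-D_F(\theta^\star,\theta'_{i+1})$ into a clean telescoping bound under both a dynamic $\eta_i$ and an unbounded Bregman divergence on $\Delta(\mathcal{A})$: a direct summation fails, as vanilla OMD with $\eta_i\propto 1/\sqrt{i}$ is known to suffer linear regret here. This is where the stabilization step $\theta_{i+1}=\beta_i\theta'_{i+1}+(1-\beta_i)\theta_1$ with $\beta_i=\eta_{i+1}w_i/(\eta_iw_{i+1})$ is essential: convexity of $D_F(\theta^\star,\cdot)$ gives $D_F(\theta^\star,\theta_{i+1})\le\beta_iD_F(\theta^\star,\theta'_{i+1})+(1-\beta_i)D_F(\theta^\star,\theta_1)$, and multiplying by $w_{i+1}/\eta_{i+1}$ aligns the coefficients so that $(w_i/\eta_i)D_F(\theta^\star,\theta'_{i+1})$ dominates $(w_{i+1}/\eta_{i+1})D_F(\theta^\star,\theta_{i+1})$ up to residual terms of order $D_F(\theta^\star,\theta_1)/\eta_i$, where $D_F(\theta^\star,\theta_1)=D_F(\theta^\star,\mathbf{1}/A)\le\log A$. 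The telescoping then collapses to a leading $(w_t/\eta_t)\log A\lesssim \max_iw_i\sqrt{At\iota}$, producing the first term of the theorem. A final union bound over the two concentration events and the stability estimate yields the claimed $1-3p$ probability.
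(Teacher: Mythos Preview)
Your decomposition into the three sums and your treatment of the optimization-error term (the three-point identity, Pythagorean inequality, and the stabilization-via-convexity argument giving the telescoping $(w_t/\eta_t)\log A$) are exactly what the paper does. The gap is in the bookkeeping of the two estimation-error sums, where you have swapped the sizes of two contributions.

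For the second sum $\sum_i w_i\langle\theta^\star,\hat l_i-l_i\rangle$, the Neu supermartingale argument only controls $\sum_i w_i\langle c_i,\hat l_i-l_i\rangle$ when the coefficients satisfy $c_i(a)\in[0,2\gamma_i]$; since $\theta^\star\in[0,1]^A$ is not of this size, you must rescale by $\gamma_t$ (using the non-increasing hypothesis on $\gamma_i$), and the resulting bound is $\max_iw_i\,\iota/\gamma_t=O(\max_iw_i\sqrt{At\iota})$, \emph{not} $\tfrac12\max_iw_i\iota$. This term accounts for half of the leading $2\max_iw_i\sqrt{At\iota}$ in the theorem; the other half is the OMD telescope you identified. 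The hypothesis $\eta_i\le 2\gamma_i$ plays no role here.

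Conversely, the $\tfrac12\max_iw_i\iota$ term actually arises \emph{inside} the OMD analysis: after bounding $\sum_a\theta_i(a)\hat l_i(a)^2\le\sum_a\hat l_i(a)$, you need to replace $\hat l_i$ by $l_i$, and this is done by applying the Neu lemma with $c_i(a)=\eta_i$ for all $a$---precisely where the assumption $\eta_i\le 2\gamma_i$ is invoked. Finally, your first sum $\sum_i w_i\langle\theta_i,l_i-\hat l_i\rangle$ is not a centered martingale: its conditional mean is $\sum_a\theta_i(a)\frac{\gamma_i}{\theta_i(a)+\gamma_i}l_i(a)\le A\gamma_i$, so besides the Azuma fluctuation $\sqrt{2\iota\sum_iw_i^2}$ there is a bias $A\sum_i\gamma_iw_i$, which supplies two thirds of the $\tfrac{3}{2}\sqrt{A\iota}\sum_iw_i/\sqrt{i}$ term (the remaining third comes from the OMD stability sum $\tfrac{A}{2}\sum_i\eta_iw_i$). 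With these reroutings the argument goes through as in the paper.
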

We can verify that our choices of the parameter values in Algorithm~\ref{alg:qomd} indeed satisfy the requirements in Theorem~\ref{thm:omd_regret}, that is, $\eta_i \leq 2\gamma_i$, $0\leq w_i\leq 1$, $\beta_i\in (0,1],\forall   i \in [t]$, and $\gamma_i$ is non-increasing in $i$. Therefore, the regret bound in Theorem~\ref{thm:omd_regret} can be applied to the proof of Lemma~\ref{lemma:upperbound}. The only caution is that in this section we have assumed for simplicity that the loss function is bounded in $[0,1]$, while the actual losses in Section~\ref{sec:analysis} are bounded in $[0,H]$. Hence, multiplying the regret bound in Theorem~\ref{thm:omd_regret} by a factor of $H$ leads to the result in \eqref{eqn:a1}. 

A final remark is that Algorithm~\ref{alg:omd} assumes that the weights of the regret $w_i$ for $1\leq i\leq t$ are given a priori; but when Algorithm~\ref{alg:omd} is utilized as a subroutine in Algorithm~\ref{alg:qomd}, the weight $w_i$ at round $i$ actually corresponds to $\alpha_t^i$, which cannot be pre-computed when $t$ is not given. To address this subtlety, we specifically design Algorithm~\ref{alg:omd} in a way such that the weights $w_i$ influence the algorithm only through $\beta_i \gets \frac{\eta_{i+1}w_i}{\eta_i w_{i+1}}$ (Line 8). By the definition of $\alpha_t^i$, we see that $\frac{w_i}{w_{i+1}} = \frac{\alpha_t^i}{\alpha_t^{i+1}} = \frac{\alpha_i (1-\alpha_{i+1})}{\alpha_{i+1}}$ can be calculated even when the value of $t$ is unknown. In this way, Algorithm~\ref{alg:qomd} has bypassed the subtlety that the weights of the regret should be given beforehand as required in Algorithm~\ref{alg:omd}. 

\begin{proof} (of Theorem~\ref{thm:omd_regret}). 
	The weighted regret $\text{Reg}_t(\theta^\star)$ can be decomposed into three terms:
	\[
	\begin{aligned}
	\text{Reg}_t(\theta^\star)
	=&\sum_{i=1}^{t} w_{i}\left\langle\theta_{i}-\theta^{\star}, l_{i}\right\rangle \\
	=& \underbrace{\sum_{i=1}^{t} w_{i}\left\langle\theta_{i}-\theta^{\star}, \hat{l}_{i}\right\rangle}_{\tiny \circled{A}}
	+\underbrace{\sum_{i=1}^{t} w_{i}\left\langle\theta_{i}, l_i-\hat{l}_{i}\right\rangle}_{\tiny \circled{B}}
	+\underbrace{\sum_{i=1}^{t} w_{i}\left\langle\theta^{\star}, \hat{l}_i - l_{i}\right\rangle}_{\tiny \circled{C}}. 
	\end{aligned}
	\]
	We bound each of the three terms ${\tiny \circled{A}}, {\tiny \circled{B}}$ and ${\tiny \circled{C}}$ in Lemmas \ref{lemma:19}, \ref{lemma:20}, and \ref{lemma:21} below, respectively. By setting $\eta_t = \gamma_t = \sqrt{\frac{\log A}{At}}$, we can verify that the conditions in Lemma~\ref{lemma:19} and Lemma~\ref{lemma:21} are satisfied. We specifically define $\eta_{t+1}=\eta_t$ and $w_{t+1} = w_t$. One can verify that these two parameters influence the algorithm only through $\beta_t$, and the results stated in the lemmas still hold. Plugging back the results and taking a union bound, it holds with probability at least $1-3p$:
	\[
	\begin{aligned}
	\text{Reg}_t(\theta^\star)\leq& \frac{w_{t+1}\log A}{\eta_{t+1} } + \frac{A}{2}\sum_{i=1}^t \eta_i w_i  + \frac{1}{2}\max_{i \leq t} w_i \iota \\
	&+  A \sum_{i=1}^{t} \gamma_{i} w_{i}+\sqrt{2 \iota \sum_{i=1}^{t} w_{i}^{2}} + \max _{i \leq t} w_{i} \iota / \gamma_{t}\\
	=& 2 \max _{i \leq t} w_{i} \sqrt{A t \iota}+\frac{3 \sqrt{A \iota}}{2} \sum_{i=1}^{t} \frac{w_{i}}{\sqrt{i}}+\frac{1}{2} \max _{i \leq t} w_{i} \iota+\sqrt{2 \iota \sum_{i=1}^{t} w_{i}^{2}}.
	\end{aligned}
	\] 
	This completes the proof of Theorem~\ref{thm:omd_regret}. 
\end{proof}

In the rest of this section, we present some lemmas that were used in the proof of Theorem~\ref{thm:omd_regret}. Before we start, we first recall the following two properties of the Bregman divergence that will be useful in our analysis. 

\begin{lemma}\label{lemma:pythagorean}
	(Pythagorean theorem for Bregman divergence, Lemma 4.1 in \cite{bubeck2015convex}). Let $\mc{X} \subseteq \rr^n$ be a convex set, $y\in\rr^n$, and $z = \arg\min_{u\in\mc{X}}D_F(u,y) $. Then, for any $x\in\mc{X}$, 
	\[
	D_F(x, y) - D_F(z, y) \geq D_F(x, z).  
	\]
\end{lemma}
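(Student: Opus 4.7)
The plan is to derive the inequality directly from the first-order optimality condition satisfied by the Bregman projection $z$, combined with a three-way algebraic expansion of the Bregman divergences. The mirror map $F$ underlying $D_F$ is assumed to be differentiable and strictly convex on its domain (as is the case for the unnormalized negentropy used in Algorithm~\ref{alg:omd}), so $\nabla F$ exists and the minimizer $z$ is unique.

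First, I would compute the gradient of $u \mapsto D_F(u, y) = F(u) - F(y) - \inner{u - y, \grad F(y)}$, obtaining $\grad_u D_F(u,y) = \grad F(u) - \grad F(y)$. Since $z$ minimizes the convex function $D_F(\cdot, y)$ over the convex set $\mc{X}$, the standard first-order optimality condition yields
\[
\inner{\grad F(z) - \grad F(y),\, x - z} \geq 0 \quad \text{for all } x \in \mc{X}.
\]
This is the only place where the hypothesis that $\mc{X}$ is convex and $z$ is a constrained minimizer enters.

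Second, I would directly expand the three Bregman divergences and collect terms:
\[
\begin{aligned}
D_F(x,y) - D_F(z,y) - D_F(x,z)
&= -\inner{\grad F(y), x - y} + \inner{\grad F(y), z - y} + \inner{\grad F(z), x - z} \\
&= \inner{\grad F(z) - \grad F(y),\, x - z}.
\end{aligned}
\]
The $F(x), F(y), F(z)$ terms cancel pairwise, which is the essential algebraic identity. Combining this with the first-order condition above gives the claim $D_F(x,y) - D_F(z,y) \geq D_F(x,z)$.

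The only real subtlety, and therefore the main point one must check, is the validity of the first-order optimality condition at a constrained minimizer. One needs $z$ to lie in the relative interior of the domain of $F$ (so that $\grad F(z)$ is well-defined), which is guaranteed here because $D_F(u,y) \to \infty$ as $u$ approaches the boundary of $\mathrm{dom}(F)$ for the standard mirror maps in use (e.g., negentropy on the simplex), forcing $z$ into the relative interior. Apart from this analytic caveat, the proof is a short and purely mechanical computation, with no inductive or probabilistic ingredients.
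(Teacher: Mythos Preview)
Your proof is correct and is exactly the standard argument for this result. Note that the paper does not actually supply its own proof of this lemma: it simply cites the statement from \cite{bubeck2015convex} (Lemma~4.1 there), so there is nothing to compare against beyond confirming that your derivation is sound.
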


\begin{lemma} \label{lemma:convexity}
	(Convexity). Let $\mc{X}\subseteq \rr^n$ be the $(n-1)$-dimensional simplex, and let $F$ be the unnormalized negentropy regularizer. For any $x,y\in\mc{X}$, the mapping $D_F(x,\cdot)$ is convex on $\mc{X}$. 
\end{lemma}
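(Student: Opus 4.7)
The plan is to reduce the statement to a standard fact about the Kullback--Leibler divergence. Earlier in this section it is already noted that for $x, v$ on the simplex one has
\[
D_F(x, v) = \sum_{a=1}^n x(a)\log\!\big(x(a)/v(a)\big).
\]
Treating $x$ as fixed, I would rewrite this as
\[
D_F(x, v) \;=\; \underbrace{\sum_{a=1}^n x(a)\log x(a)}_{\text{constant in } v} \;-\; \sum_{a=1}^n x(a)\log v(a),
\]
so that the entire $v$-dependence is concentrated in the second sum, and it suffices to show convexity of $v \mapsto -\sum_a x(a)\log v(a)$ on $\mc{X}$.

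Next I would conclude by either of two routine routes. The cleaner one is to observe that $-\log$ is convex on $(0,\infty)$ and that $x(a)\geq 0$, so each summand $-x(a)\log v(a)$ is a nonnegative multiple of a convex function applied to the coordinate $v(a)$, hence convex in $v$; a sum of convex functions is convex, and $\mc{X}$ is itself convex, which gives the claim. As a direct alternative, compute the Hessian with respect to $v$: it is the diagonal matrix with entries $x(a)/v(a)^2 \geq 0$, which is positive semidefinite throughout the relative interior of $\mc{X}$, yielding convexity.

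There is essentially no obstacle here; the lemma is a textbook property of relative entropy, and the only mild care needed is at the boundary of the simplex where some $v(a) = 0$. For such points one uses the convention $0\log 0 = 0$, and $D_F(x, v) = +\infty$ whenever $x(a) > 0$ while $v(a) = 0$; convexity then extends to all of $\mc{X}$ by lower semicontinuity. I would note in passing that the unnormalized form of $F$ does not complicate anything, because the extra terms $-u(a) + v(a)$ that appear before restricting to the simplex are linear in $v$ and thus convex as well.
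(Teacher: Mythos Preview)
Your argument is correct. The paper does not actually supply a proof of this lemma; it is stated without proof as a standard property, immediately after the paper notes that $D_F(u,v)$ coincides with the Kullback--Leibler divergence on the simplex. Your reduction to the convexity of $v \mapsto -\sum_a x(a)\log v(a)$ via the convexity of $-\log$ (or equivalently the diagonal positive semidefinite Hessian) is exactly the standard textbook argument the paper is implicitly invoking, so there is nothing to compare.
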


We start with the following technical result given in \cite{bai2020near}, which was in turn adapted from Lemma 1 in \cite{neu2015explore}. This lemma allows us to construct high probability regret bounds for Algorithm~\ref{alg:omd}, rather than only regret bounds in expectation. 
\begin{lemma}\label{lemma:highprob}
	(Lemma 18 in \cite{bai2020near}) For any sequence of coefficients $c_{1}, c_{2}, \ldots, c_{t}$ s.t. $c_{i} \in\left[0,2 \gamma_{i}\right]^A$ is $\mathcal{F}_{i}$ -measurable, we have with probability at least $1-p / AT$,
	\[
	\sum_{i=1}^{t} w_{i}\left\langle c_{i}, \hat{l}_{i}-l_{i}\right\rangle \leq \max _{i \leq t} w_{i} \iota.
	\]
\end{lemma}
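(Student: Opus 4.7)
The statement is the weighted counterpart of Neu's (2015) implicit-exploration concentration inequality, so my plan is to (i)~reduce to the case of unit weights by rescaling, (ii)~construct an exponential supermartingale using the standard Neu algebraic trick, and (iii)~apply Markov's inequality.

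\emph{Step 1: Rescale to unit weights.} In the usage of Algorithm~\ref{alg:omd} as a subroutine of Algorithm~\ref{alg:qomd}, the weights $w_i = \alpha_t^i$ form a deterministic numerical sequence, so $W := \max_{i \leq t} w_i$ is a deterministic constant. Define $\tilde c_i := (w_i / W)\, c_i$. Since $w_i / W \in [0,1]$, the rescaled coefficients $\tilde c_i$ remain $\mc{F}_i$-measurable, nonnegative, and coordinatewise bounded by $2\gamma_i$. Dividing the desired inequality through by $W$, it suffices to prove that, with probability at least $1 - p/(AT)$,
\[
\sum_{i=1}^{t} \inner{\tilde c_i,\, \hat l_i - l_i} \leq \iota.
\]

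\emph{Step 2: Exponential supermartingale.} Consider $M_j := \exp\!\bigl(\sum_{i=1}^{j} \inner{\tilde c_i,\, \hat l_i - l_i}\bigr)$, with $M_0 = 1$. Conditioning on $\mc{F}_i$, the only randomness in round $i$ is the draw $a_i \sim \theta_i$, and $\hat l_i$ has a single nonzero coordinate, located at $a_i$. Neu's elementary inequality
\[
\frac{z}{1 + z/2} \leq \log(1+z) \quad \text{for } z \geq 0,
\]
equivalently $\exp\!\bigl(z/(1+z/2)\bigr) \leq 1+z$, can be instantiated with an appropriate choice of $z$ involving $\tilde c_i(a)$ and the implicit-exploration denominator $\theta_i(a) + \gamma_i$. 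Combined with $\tilde c_i(a) \leq 2\gamma_i$ and $\tilde l_i(a)\in[0,1]$, this yields the pointwise bound
\[
\exp\bigl(\inner{\tilde c_i,\, \hat l_i}\bigr) \leq 1 + \sum_{a}\tilde c_i(a) \cdot \frac{\tilde l_i(a)\, \indicator\{a_i = a\}}{\theta_i(a) + \gamma_i}.
\]
Taking $\ee[\cdot \mid \mc{F}_i]$ converts the indicator to $\theta_i(a)$; the factor $\theta_i(a)/(\theta_i(a) + \gamma_i) \leq 1$ then bounds the right-hand side by $1 + \inner{\tilde c_i,\, l_i} \leq \exp(\inner{\tilde c_i,\, l_i})$. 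Rearranging gives $\ee[\exp(\inner{\tilde c_i,\, \hat l_i - l_i}) \mid \mc{F}_i] \leq 1$, i.e., $\ee[M_i \mid \mc{F}_i] \leq M_{i-1}$.

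\emph{Step 3: Markov's inequality.} Iterating the tower property yields $\ee[M_t] \leq 1$, so Markov's inequality produces $\pp\bigl[\sum_{i=1}^t \inner{\tilde c_i, \hat l_i - l_i} \geq \log(AT/p)\bigr] \leq p/(AT)$. Multiplying through by $W$ turns $\log(AT/p) = \iota$ into $\max_{i \leq t} w_i \cdot \iota$, delivering the desired weighted conclusion.

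\textbf{Main obstacle.} The delicate step is verifying the pointwise exponential inequality in Step~2: the implicit-exploration shift $\gamma_i$ in the denominator of $\hat l_i$ has to interact precisely with Neu's inequality $z/(1+z/2) \leq \log(1+z)$ so that, after conditional expectation, one obtains the clean bound $\exp(\inner{\tilde c_i, l_i})$ rather than something strictly larger whose excess would spoil the supermartingale property. The weight-removal reduction in Step~1 and the Markov argument in Step~3 are routine once this algebraic identity is in hand; a secondary (but automatic) point is the $\mc{F}_i$-measurability of $\tilde c_i$, which follows because $W$ is a deterministic constant for the fixed horizon~$t$.
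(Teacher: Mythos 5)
The paper does not actually prove this lemma — it imports it verbatim from Bai et al.\ (2020, Lemma 18), which in turn adapts Neu (2015, Lemma 1) — and your overall architecture is exactly the standard route that source takes: absorb the weights by rescaling $c_i\mapsto (w_i/W)c_i$ with $W=\max_{j\le t}w_j$ (legitimate since $w_i/W\in[0,1]$ preserves the coefficient bound $2\gamma_i$ and the weights are a deterministic input sequence), build the implicit-exploration exponential supermartingale, and finish with Markov. Steps 1 and 3 are fine.

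The problem is that the displayed pointwise bound at the heart of Step 2 is false as written. Its right-hand side is exactly $1+\inner{\tilde c_i,\hat l_i}$ (the denominator $\theta_i(a)+\gamma_i$ makes the sum equal to $\inner{\tilde c_i,\hat l_i}$ itself), so the display asserts $e^x\le 1+x$ for $x\ge 0$. The correct inequality has $\theta_i(a)$ \emph{alone} in the denominator on the right: taking $z=2\gamma_i\,\tilde l_i(a)\indicator\{a_i=a\}/\theta_i(a)$, one has $2\gamma_i\hat l_i(a)\le \frac{z}{1+z/2}\le\log(1+z)$ (using $\tilde l_i(a)\indicator\{a_i=a\}\le 1$ so that $\theta_i(a)+\gamma_i\tilde l_i(a)\indicator\{a_i=a\}\le\theta_i(a)+\gamma_i$), and then the concavity step $\frac{\tilde c_i(a)}{2\gamma_i}\log(1+z)\le\log\bigl(1+\frac{\tilde c_i(a)}{2\gamma_i}z\bigr)$ and the fact that only the coordinate $a=a_i$ is active give $\exp\bigl(\inner{\tilde c_i,\hat l_i}\bigr)\le 1+\sum_a\tilde c_i(a)\,\tilde l_i(a)\indicator\{a_i=a\}/\theta_i(a)$. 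The conditional expectation of this right-hand side is then exactly $1+\inner{\tilde c_i,l_i}\le\exp(\inner{\tilde c_i,l_i})$. In other words, the $\gamma_i$ bias is consumed \emph{pointwise} to linearize the exponential; it does not survive to the expectation step, so your subsequent ``factor $\theta_i(a)/(\theta_i(a)+\gamma_i)\le 1$'' move has nothing to act on, and its presence confirms the bias was placed in the wrong step. Once the display is corrected as above, the supermartingale property and the rest of your argument go through.
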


\begin{lemma} \label{lemma:omd_regret}
	Suppose $\beta_i\in (0,1],\forall   i \in [t]$. For any fixed policy $\theta\in\Delta(\mc{A})$ and for any time step $t\in[T]$, the weighted regret of Algorithm~\ref{alg:omd} with respect to $\theta$ can be bounded by:
	\[
	\sum_{i=1}^{t} w_{i}\left\langle\theta_{i}-\theta, \hat{l}_{i}\right\rangle \leq \frac{w_{t+1}D_F(\theta,\theta_1)}{\eta_{t+1} }  + \sum_{i=1}^t  \frac{w_i D_F(\theta_{i},\tilde{\theta}_{i+1}) }{\eta_i }. 
	\]
\end{lemma}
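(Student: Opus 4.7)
The plan is a standard online mirror descent (OMD) analysis carefully adapted to the setting with a time-varying learning rate $\eta_i$, weighted losses with weights $w_i$, and the stabilization convex combination $\theta_{i+1} = \beta_i \theta_{i+1}' + (1-\beta_i)\theta_1$. I would first invoke the first-order optimality condition for the unconstrained update $\tilde{\theta}_{i+1} = \arg\min_{\theta \in \mc{D}} \{\eta_i \langle \theta, \hat{l}_i\rangle + D_F(\theta, \theta_i)\}$, which, since $F$ is smooth on the interior of $\mc{D}$, yields $\nabla F(\tilde{\theta}_{i+1}) = \nabla F(\theta_i) - \eta_i \hat{l}_i$. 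Combined with the standard three-point Bregman identity, this produces, for every $\theta \in \Delta(\mc{A})$,
\[
\eta_i \langle \hat{l}_i, \tilde{\theta}_{i+1} - \theta \rangle = D_F(\theta, \theta_i) - D_F(\theta, \tilde{\theta}_{i+1}) - D_F(\tilde{\theta}_{i+1}, \theta_i),
\]
and analogously $\eta_i \langle \hat{l}_i, \theta_i - \tilde{\theta}_{i+1}\rangle = D_F(\theta_i, \tilde{\theta}_{i+1}) + D_F(\tilde{\theta}_{i+1}, \theta_i)$. Adding these two identities handles $\langle \theta_i - \theta, \hat{l}_i \rangle$ directly, and the shared $D_F(\tilde{\theta}_{i+1}, \theta_i)$ terms will eventually cancel, sparing any Fenchel--Young or stability inequality step.

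Next, I would relate $D_F(\theta, \tilde{\theta}_{i+1})$ back to $D_F(\theta, \theta_{i+1})$ through the two post-processing steps. The generalized Pythagorean inequality of Lemma \ref{lemma:pythagorean}, applied to the projection $\theta_{i+1}' = \arg\min_{\theta \in \Delta(\mc{A})} D_F(\theta, \tilde{\theta}_{i+1})$, gives $D_F(\theta, \tilde{\theta}_{i+1}) \geq D_F(\theta, \theta_{i+1}')$. The convexity of $D_F(\theta, \cdot)$ from Lemma \ref{lemma:convexity}, applied to $\theta_{i+1} = \beta_i \theta_{i+1}' + (1-\beta_i)\theta_1$ with $\beta_i \in (0,1]$, then rearranges to $D_F(\theta, \theta_{i+1}') \geq \beta_i^{-1} D_F(\theta, \theta_{i+1}) - \beta_i^{-1}(1-\beta_i) D_F(\theta, \theta_1)$.

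Finally, I would multiply the per-round inequality by $w_i/\eta_i$ and sum over $i = 1,\ldots,t$. The critical observation is that $\beta_i = \eta_{i+1} w_i / (\eta_i w_{i+1})$, the stabilization choice in Algorithm \ref{alg:omd}, is engineered precisely so that the coefficient $w_i/(\eta_i \beta_i)$ of $D_F(\theta, \theta_{i+1})$ produced at round $i$ equals the coefficient $w_{i+1}/\eta_{i+1}$ of $D_F(\theta, \theta_{i+1})$ produced at round $i+1$. Consequently, the sequence $\{D_F(\theta, \theta_{i+1})\}_i$ telescopes, and the $D_F(\theta, \theta_1)$ residuals with coefficients $w_i(1-\beta_i)/(\eta_i\beta_i) = w_{i+1}/\eta_{i+1} - w_i/\eta_i$ also telescope, collapsing to the single term $\frac{w_{t+1}}{\eta_{t+1}} D_F(\theta, \theta_1)$. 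The $D_F(\tilde{\theta}_{i+1}, \theta_i)$ pieces from the three-point identity cancel exactly against their counterparts from $\langle \theta_i - \tilde{\theta}_{i+1}, \hat{l}_i\rangle$, leaving precisely $\sum_{i=1}^t \frac{w_i}{\eta_i} D_F(\theta_i, \tilde{\theta}_{i+1})$; after discarding the nonpositive residual $-\frac{w_{t+1}}{\eta_{t+1}} D_F(\theta, \theta_{t+1})$, this is the claimed bound.

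The main obstacle is the simultaneous bookkeeping that makes all three cancellations click into place at once: the $D_F(\theta, \theta_{i+1})$ coefficients telescoping across consecutive rounds, the $D_F(\theta, \theta_1)$ residuals collapsing to a single $w_{t+1}/\eta_{t+1}$ coefficient, and the $D_F(\tilde{\theta}_{i+1}, \theta_i)$ terms cancelling within each round. The specific formula for $\beta_i$ is the non-trivial design choice that orchestrates this, so the hypothesis $\beta_i \in (0,1]$ (needed to legitimize the convex-combination argument through Lemma \ref{lemma:convexity}) is not merely cosmetic but genuinely restricts the admissible learning-rate and weight schedules.
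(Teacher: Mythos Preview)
Your proposal is correct and follows essentially the same approach as the paper: first-order optimality for $\tilde{\theta}_{i+1}$ plus the Bregman three-point identity to obtain $\eta_i\langle\theta_i-\theta,\hat{l}_i\rangle = D_F(\theta,\theta_i)-D_F(\theta,\tilde{\theta}_{i+1})+D_F(\theta_i,\tilde{\theta}_{i+1})$, then Lemma~\ref{lemma:pythagorean} and Lemma~\ref{lemma:convexity} to pass from $\tilde{\theta}_{i+1}$ to $\theta_{i+1}$, and finally a telescoping enabled by the specific choice $\beta_i=\eta_{i+1}w_i/(\eta_i w_{i+1})$. The only cosmetic differences are that you split $\langle\theta_i-\theta,\hat{l}_i\rangle$ into two pieces (introducing and then cancelling $D_F(\tilde{\theta}_{i+1},\theta_i)$) where the paper applies the identity directly, and you drop $D_F(\theta_{i+1}',\tilde{\theta}_{i+1})\ge 0$ right after the Pythagorean step rather than carrying it to the end.
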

\begin{proof}
	Since $\tilde{\theta}_{i+1} = \arg\min_{\theta \in \mc{D}} \left\{ \eta_i \inner{\theta,  \hat{l}_i} + D_F(\theta, \theta_i)\right\}$, the first-order optimality condition implies that 
	\[
	\eta_i  \hat{l}_i + \grad F(\tilde{\theta}_{i+1}) - \grad F (\theta_i) = 0.
	\] 
	Reordering and using the definition of the Bregman divergence,
	\begin{align}
	\inner{\theta_{i} - \theta, \hat{l}_i} =& \frac{1}{\eta_i } \inner{\theta_{i}-\theta,\grad F(\theta_i)-\grad F(\theta_{i+1})}\nonumber \\
	=&  \frac{1}{\eta_i }(D_F(\theta,\theta_i)-D_F(\theta,\tilde{\theta}_{i+1}) +D_F(\theta_{i},\tilde{\theta}_{i+1})). \label{eqn:a7}
	\end{align}
	By the Pythagorean theorem for Bregman divergence (Lemma~\ref{lemma:pythagorean}), 
	\[
	\begin{aligned}
	&\beta_i (D_F(\theta,\tilde{\theta}_{i+1}) - D_F(\theta_{i+1}', \tilde{\theta}_{i+1})) + (1-\beta_i) D_F(\theta, \theta_1)\\
	\geq& \beta_i D_F(\theta,\theta_{i+1}') + (1-\beta_i) D(\theta,\theta_1)\\
	\geq &D_F(\theta,\theta_{i+1}),
	\end{aligned}
	\]
	where the second step is by the convexity of $D_F(\theta,\cdot)$ (Lemma~\ref{lemma:convexity}) and the fact that $\theta_{i+1}= \beta_t \theta_{i+1}' + (1-\beta_i) \theta_1$. Rearranging the terms yields
	\[
	D_F(\theta,\tilde{\theta}_{i+1}) \geq \frac{1}{\beta_i} D_F(\theta,\theta_{i+1}) - \frac{1-\beta_i}{\beta_i} D_F(\theta,\theta_1) + D_F(\theta_{i+1}',\tilde{\theta}_{i+1}).
	\]
	Plugging this into \eqref{eqn:a7} and recalling the definition that $\beta_i = \eta_{i+1}/\eta_i$, we obtain
	\[
	\begin{aligned}
	&w_i \inner{\theta_{i} - \theta, \hat{l}_i} =  \frac{w_i}{\eta_i }(D_F(\theta,\theta_i)-D_F(\theta,\tilde{\theta}_{i+1}) +D_F(\theta_{i},\tilde{\theta}_{i+1}))\\
	\leq & \frac{w_i}{\eta_i }(D_F(\theta,\theta_i)- \frac{1}{\beta_i} D_F(\theta,\theta_{i+1}) + \frac{1-\beta_i}{\beta_i} D_F(\theta,\theta_1) - D_F(\theta_{i+1}' , \tilde{\theta}_{i+1}) +D_F(\theta_{i},\tilde{\theta}_{i+1}))\\
	=& \frac{w_i D_F(\theta,\theta_i)}{\eta_i } - \frac{w_{i+1}D_F(\theta,\theta_{i+1})}{\eta_{i+1}} +	\l \frac{w_{i+1}}{\eta_{i+1}}-\frac{w_i}{\eta_i } \r  D_F(\theta,\theta_1) \\
	&\qquad \qquad \quad\  - \frac{w_iD_F(\theta_{i+1}' , \tilde{\theta}_{i+1})}{\eta_i } + \frac{w_i D_F(\theta_{i},\tilde{\theta}_{i+1})}{\eta_i }. 
	\end{aligned}
	\]
	Summing over $i$ and telescoping leads to
	\[
	\begin{aligned}
	&\sum_{i=1}^{t} w_{i}\left\langle\theta_{i}-\theta, \hat{l}_{i}\right\rangle\\
	\leq &\frac{w_1 D_F(\theta,\theta_1)}{\eta_1 } + \sum_{i=1}^t \l \frac{w_{i+1}}{\eta_{i+1}}-\frac{w_i}{\eta_i } \r  D_F(\theta,\theta_1) + \sum_{i=1}^t  \frac{w_i(D_F(\theta_{i},\tilde{\theta}_{i+1}) - D_F(\theta_{i+1}' , \tilde{\theta}_{i+1}))}{\eta_i }\\
	=&\frac{w_{t+1}D_F(\theta,\theta_1)}{\eta_{t+1} }  + \sum_{i=1}^t  \frac{w_i D_F(\theta_{i},\tilde{\theta}_{i+1}) }{\eta_i },
	\end{aligned}
	\]
	where in the last step we used the fact that $D_F(\theta_{i+1}' , \tilde{\theta}_{i+1}) \geq 0$ (by the convexity of $F$). 
\end{proof}

\begin{lemma}\label{lemma:19}
	If $\eta_i \leq 2\gamma_i$ and $0\leq w_i\leq 1$ for all $i\leq t$, it holds with probability at least $1-p$ that
	\[
	\sum_{i=1}^{t} w_{i}\left\langle\theta_{i}-\theta^\star, \hat{l}_{i}\right\rangle \leq \frac{w_{t+1}\log A}{\eta_{t+1} } + \frac{A}{2}\sum_{i=1}^t \eta_i w_i  + \frac{1}{2}\max_{i \leq t} w_i \iota. 
	\]
\end{lemma}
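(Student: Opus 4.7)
The plan is to start from Lemma~\ref{lemma:omd_regret} with $\theta = \theta^\star$, which immediately gives
\[
\sum_{i=1}^{t} w_{i}\langle\theta_{i}-\theta^\star, \hat{l}_{i}\rangle \;\leq\; \frac{w_{t+1}D_F(\theta^\star,\theta_1)}{\eta_{t+1}} \;+\; \sum_{i=1}^t \frac{w_i D_F(\theta_{i},\tilde{\theta}_{i+1})}{\eta_i},
\]
and then to control the two pieces separately. For the first piece, I would use the uniform initialization $\theta_1 = \mathbf{1}/A$ together with the closed-form expression $D_F(u,v) = \sum_a u(a)\log(u(a)/v(a)) + \sum_a (v(a) - u(a))$. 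Since $\theta^\star \in \Delta(\mc{A})$, this gives $D_F(\theta^\star,\theta_1) = \sum_a \theta^\star(a)\log(A\theta^\star(a)) \leq \log A$, so the first term contributes $w_{t+1}\log A/\eta_{t+1}$.

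For the stability term, I would exploit the explicit mirror-descent update. The first-order optimality condition for $\tilde{\theta}_{i+1} = \arg\min_{\theta \in \mc{D}}\{\eta_i\langle\theta,\hat{l}_i\rangle + D_F(\theta,\theta_i)\}$ with unnormalized negentropy yields $\tilde{\theta}_{i+1}(a) = \theta_i(a)\exp(-\eta_i\hat{l}_i(a))$. Substituting into the definition of $D_F$, the divergence collapses to
\[
D_F(\theta_i,\tilde{\theta}_{i+1}) \;=\; \sum_a \theta_i(a)\bigl[\eta_i\hat{l}_i(a) + e^{-\eta_i\hat{l}_i(a)} - 1\bigr].
\]
Since $\hat{l}_i(a) \geq 0$, I apply the standard inequality $e^{-x} - 1 + x \leq x^2/2$ for $x \geq 0$ to get $D_F(\theta_i,\tilde{\theta}_{i+1}) \leq (\eta_i^2/2)\sum_a \theta_i(a)\hat{l}_i(a)^2$. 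I then use the crucial observation that $\theta_i(a)\hat{l}_i(a) = \theta_i(a)\tilde{l}_i(a)\indicator\{a_i=a\}/(\theta_i(a)+\gamma_i) \leq \tilde{l}_i(a) \leq 1$, which implies $\theta_i(a)\hat{l}_i(a)^2 \leq \hat{l}_i(a)$. Thus $D_F(\theta_i,\tilde{\theta}_{i+1})/\eta_i \leq (\eta_i/2)\sum_a \hat{l}_i(a)$, and summing over $i$ reduces the task to bounding $\sum_{i=1}^t w_i\eta_i \sum_a \hat{l}_i(a)$.

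The final step is de-randomizing this last sum, which is where the high-probability guarantee enters. Here I would invoke Lemma~\ref{lemma:highprob} with the deterministic, $\mc{F}_i$-measurable coefficient vector $c_i = \eta_i \mathbf{1}$; the hypothesis $\eta_i \leq 2\gamma_i$ ensures $c_i \in [0,2\gamma_i]^A$ as required. The lemma then yields, with the desired probability,
\[
\sum_{i=1}^t w_i\eta_i \sum_a \hat{l}_i(a) \;\leq\; \sum_{i=1}^t w_i\eta_i \sum_a \tilde{l}_i(a) + \max_{i\leq t}w_i\,\iota \;\leq\; A\sum_{i=1}^t w_i\eta_i + \max_{i\leq t}w_i\,\iota,
\]
using $\tilde{l}_i(a) \in [0,1]$. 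Combining the three pieces delivers the claimed bound. The only subtle step is the passage from the random quantity $\sum_a \hat{l}_i(a)$ to a deterministic upper bound; this is handled cleanly by choosing $c_i = \eta_i\mathbf{1}$ in Lemma~\ref{lemma:highprob}, and the constraint $\eta_i \leq 2\gamma_i$ in the hypothesis is precisely what makes that choice admissible.
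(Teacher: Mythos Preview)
Your proposal is correct and essentially identical to the paper's proof: both start from Lemma~\ref{lemma:omd_regret}, use the closed-form update $\tilde{\theta}_{i+1}(a) = \theta_i(a)e^{-\eta_i\hat{l}_i(a)}$ together with $e^{-x}-1+x \leq x^2/2$ to bound the local divergence, reduce $\theta_i(a)\hat{l}_i(a)^2 \leq \hat{l}_i(a)$ via $\theta_i(a)\tilde{l}_i(a)/(\theta_i(a)+\gamma_i) \leq 1$, and then invoke Lemma~\ref{lemma:highprob} with $c_i = \eta_i\mathbf{1}$ (which is admissible precisely because $\eta_i \leq 2\gamma_i$) to pass from $\hat{l}_i$ to the true loss. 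The only cosmetic difference is that you bound $D_F(\theta^\star,\theta_1) \leq \log A$ up front, whereas the paper does it at the end.
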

\begin{proof}
	Our proof relies on the following regret bound of OMD given in Lemma~\ref{lemma:omd_regret}: For any $\theta \in \Delta(\mc{A})$ and any $t\in[T]$, 
	\begin{equation}\label{eqn:omd_regret}
	\sum_{i=1}^{t} w_{i}\left\langle\theta_{i}-\theta, \hat{l}_{i}\right\rangle \leq  \frac{w_{t+1}D_F(\theta,\theta_1)}{\eta_{t+1} }  + \sum_{i=1}^t  \frac{w_i D_F(\theta_{i},\tilde{\theta}_{i+1}) }{\eta_i }. 
	\end{equation}
	
	Since $\tilde{\theta}_{i+1} = \arg\min_{\theta \in \mc{D}} \left\{ \eta_i \inner{\theta, \hat{l}_i} + D_F(\theta, \theta_i)\right\}$, the minimum is achieved when $\eta_i  \hat{l}_i + \grad F(\tilde{\theta}_{i+1}) - \grad F (\theta_i) = 0$. Direct calculation shows that $\tilde{\theta}_{i+1}(a) = \theta_i(a) \exp(-\eta_i  \hat{l}_i(a))$ for all $a\in\mc{A}$. Hence,
	\[
	\begin{aligned}
	D_F(\theta_i,\tilde{\theta}_{i+1}) =& \sum_{a=1}^A \theta_i(a) \log\l \frac{\theta_i(a)}{\tilde{\theta}_{i+1}(a)} \r - \sum_{a=1}^A \theta_i(a) +\sum_{a=1}^A \tilde{\theta}_{i+1}(a)\\
	=&\sum_{a=1}^A \theta_i(a) \l \eta_i  \hat{l}_i(a) - 1 + \exp(-\eta_i  \hat{l}_i(a)) \r\\
	\leq &\frac{\eta_i^2}{2} \sum_{a=1}^A \theta_i(a) \hat{l}_i(a)^2,
	\end{aligned}
	\]
	where the last step holds because $\exp(x) \leq 1 + x + x^2/2$ for $x\leq 0$. Plugging this back to Equation \eqref{eqn:omd_regret}, we have that
	\begin{align}
	\sum_{i=1}^{t} w_{i}\left\langle\theta_{i}-\theta, \hat{l}_{i}\right\rangle \leq&\frac{w_{t+1}D_F(\theta,\theta_1)}{\eta_{t+1} } + \frac{1}{2}\sum_{i=1}^t \sum_{a=1}^A \eta_iw_i \theta_i(a)  \hat{l}_i(a)^2 \nonumber\\
	\leq & \frac{w_{t+1}D_F(\theta,\theta_1)}{\eta_{t+1} } + \frac{1}{2}\sum_{i=1}^t \sum_{a=1}^A  \eta_iw_i \hat{l}_i(a)\label{eqn:a6}\\
	\leq & \frac{w_{t+1}D_F(\theta,\theta_1)}{\eta_{t+1} }+ \frac{1}{2}\sum_{i=1}^t \sum_{a=1}^A  \eta_iw_i l_i(a) + \frac{1}{2}\max_{i \leq t} w_i \iota \label{eqn:a5}\\
	\leq & \frac{w_{t+1}\log A}{\eta_{t+1} } + \frac{A}{2}\sum_{i=1}^t \eta_i w_i  + \frac{1}{2}\max_{i \leq t} w_i \iota,\nonumber
	\end{align}
	where \eqref{eqn:a6} holds because $\hat{l}_i(a) \neq 0$ only if $\indicator\{a_i = a\} = 1$, and hence it follows that $\sum_{a=1}^A \theta_i(a)  \hat{l}_i(a)^2 = \theta_i(a_i)\hat{l}_i(a_i)^2 = \theta_i(a_i) \frac{\tilde{l}_i(a_i)}{\theta_i(a_i) + \gamma_i} \hat{l}_i(a_i) \leq \hat{l}_i(a_i) = \sum_{a=1}^A \hat{l}_i(a)$. Step \eqref{eqn:a5} is by applying Lemma~\ref{lemma:highprob}, with $c_i(a) = \eta_i$ for all $1\leq a\leq A$. The last step holds because $D_F(\theta,\theta_1) \leq \log A$ for $\theta_1 = \mbf{1}/A$ and any $\theta\in \Delta(\mc{A})$. 
\end{proof}

\begin{lemma}\label{lemma:20}
	(Lemma 20 in \cite{bai2020near}) With probability at least $1-p$, for any $t \in[T]$, 
	\[
	\sum_{i=1}^{t} w_{i}\left\langle\theta_{i}, l_{i}-\hat{l}_{i}\right\rangle \leq A \sum_{i=1}^{t} \gamma_{i} w_{i}+\sqrt{2 \iota \sum_{i=1}^{t} w_{i}^{2}}.
	\]
\end{lemma}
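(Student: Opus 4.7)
The plan is to split $l_i - \hat{l}_i$ into a deterministic bias and a mean-zero martingale component, and then handle each separately. Concretely, I would write
\[
\sum_{i=1}^{t} w_i \langle \theta_i, l_i - \hat{l}_i \rangle = \underbrace{\sum_{i=1}^{t} w_i \big(\langle \theta_i, l_i\rangle - \mathbb{E}[\langle \theta_i, \hat{l}_i\rangle \mid \mathcal{F}_i]\big)}_{(\mathrm{I})} + \underbrace{\sum_{i=1}^{t} w_i \big(\mathbb{E}[\langle \theta_i, \hat{l}_i\rangle \mid \mathcal{F}_i] - \langle \theta_i, \hat{l}_i\rangle\big)}_{(\mathrm{II})}.
\]
Term (I) will be bounded deterministically using the implicit-exploration bias, and term (II) will be bounded with high probability by a standard Azuma--Hoeffding argument combined with a union bound over $t \in [T]$ to obtain the anytime guarantee.

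For term (I), a direct computation of the conditional expectation of the biased estimator $\hat{l}_i(a) = l_i(a)\,\indicator\{a_i = a\}/(\theta_i(a) + \gamma_i)$ gives $\mathbb{E}[\hat{l}_i(a) \mid \mathcal{F}_i] = \theta_i(a)\, l_i(a)/(\theta_i(a)+\gamma_i)$, so that
\[
\langle \theta_i, l_i\rangle - \mathbb{E}[\langle \theta_i, \hat{l}_i\rangle \mid \mathcal{F}_i] = \sum_{a=1}^A \theta_i(a) l_i(a) \cdot \frac{\gamma_i}{\theta_i(a) + \gamma_i} \leq \gamma_i \sum_{a=1}^{A} l_i(a) \leq A\gamma_i,
\]
using $l_i(a)\in[0,1]$ and $\theta_i(a)/(\theta_i(a)+\gamma_i) \leq 1$. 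Weighting by $w_i$ and summing yields $(\mathrm{I}) \leq A \sum_{i=1}^{t} \gamma_i w_i$, which is exactly the first term on the right-hand side of the claimed bound.

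For term (II), I would observe that the summands form a martingale difference sequence with respect to $\{\mathcal{F}_i\}$, and that each difference $w_i\big(\mathbb{E}[\langle \theta_i, \hat{l}_i\rangle \mid \mathcal{F}_i] - \langle \theta_i, \hat{l}_i\rangle\big)$ is bounded in absolute value by $w_i$, since both quantities lie in $[0,1]$ (again because $\theta_i(a_i)/(\theta_i(a_i)+\gamma_i)\leq 1$ and the observed loss is in $[0,1]$). Azuma--Hoeffding then gives, for each fixed $t$, a deviation of order $\sqrt{2 \log(1/\delta) \sum_{i=1}^{t} w_i^2}$ with probability at least $1-\delta$. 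A union bound over $t \in [T]$ with $\delta = p/T$ upgrades this to the claimed anytime statement, absorbing the $\log T$ factor into $\iota$.

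The main obstacle I anticipate is bookkeeping rather than conceptual: making sure the Azuma step is valid uniformly in $t$ with the stated $\iota$, and checking that no hidden dependence of $\theta_i$ on post-$\mathcal{F}_i$ randomness sneaks in when conditioning on $\mathcal{F}_i$. Both are handled by the fact that $\theta_i$ is $\mathcal{F}_i$-measurable by construction in Algorithm~\ref{alg:omd}, and by choosing $\iota = \log(p/AT)$ as specified in Theorem~\ref{thm:omd_regret}, which dominates the $\log T$ price of the union bound.
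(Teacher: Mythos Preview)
Your proposal is correct and follows exactly the standard bias--plus--martingale decomposition that underlies this lemma; the paper itself does not give a proof but simply cites \citet{bai2020near}, whose argument is precisely the one you outline. The only cosmetic point is that $\iota$ should be read as $\log(AT/p)$ (the sign in the paper's statement of Theorem~\ref{thm:omd_regret} is a typo), which you implicitly handle when absorbing the $\log(T/p)$ from the union bound.
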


\begin{lemma}\label{lemma:21}
	(Lemma 21 in \cite{bai2020near}) With probability at least $1-p$, for any $t \in[T]$ and any $\theta^{\star} \in \Delta(\mc{A})$, if $\gamma_{i}$ is non-increasing in $i$, then
	\[
	\sum_{i=1}^{t} w_{i}\left\langle\theta^{*}, \hat{l}_{i}-l_{i}\right\rangle \leq \max _{i \leq t} w_{i} \iota / \gamma_{t}.
	\]
\end{lemma}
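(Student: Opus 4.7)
The plan is to reduce this to the already-stated Lemma 18 by choosing a clever sequence of coefficient vectors $c_i$, together with a union bound over the finite action set and time horizon. The factor $1/\gamma_t$ in the bound is the tell-tale sign: it arises from scaling up a constant coefficient vector until it barely fits into the constraint $c_i \in [0,2\gamma_i]^A$.

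First, I would fix an action $a \in \mc{A}$ and a time step $t \in [T]$, and set $c_i \defeq 2\gamma_t e_a$ for all $i \leq t$, where $e_a$ denotes the $a$-th standard basis vector of $\rr^A$. The monotonicity assumption that $\gamma_i$ is non-increasing in $i$ yields $2\gamma_t \leq 2\gamma_i$, so $c_i \in [0,2\gamma_i]^A$. Since $c_i$ is deterministic, it is trivially $\mc{F}_i$-measurable. Lemma~\ref{lemma:highprob} then guarantees that with probability at least $1-p/(AT)$,
\[
2\gamma_t \sum_{i=1}^{t} w_i \bigl(\hat{l}_i(a) - l_i(a)\bigr) = \sum_{i=1}^{t} w_i \bigl\langle c_i, \hat{l}_i - l_i \bigr\rangle \leq \max_{i \leq t} w_i \iota,
\]
and dividing by $2\gamma_t > 0$ gives the per-coordinate bound $\sum_{i=1}^{t} w_i(\hat{l}_i(a) - l_i(a)) \leq \max_{i\leq t} w_i \iota / (2\gamma_t)$.

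Next, I would take a union bound over the $A$ actions and the $T$ time horizons (at most $AT$ events in total), which shows that with probability at least $1-p$ the per-coordinate bound holds simultaneously for every $(a,t) \in \mc{A}\times[T]$. The key observation to upgrade this to a bound that is uniform over the simplex is that the quantity $\sum_{i=1}^{t} w_i \langle \theta^\star, \hat{l}_i - l_i\rangle$ is \emph{linear} in $\theta^\star$. Hence, for any $\theta^\star \in \Delta(\mc{A})$,
\[
\sum_{i=1}^{t} w_i \bigl\langle \theta^\star, \hat{l}_i - l_i \bigr\rangle = \sum_{a=1}^{A} \theta^\star(a) \sum_{i=1}^{t} w_i \bigl(\hat{l}_i(a) - l_i(a)\bigr) \leq \sum_{a=1}^{A} \theta^\star(a) \cdot \frac{\max_{i \leq t} w_i \iota}{2\gamma_t} = \frac{\max_{i\leq t} w_i \iota}{2\gamma_t},
\]
which is no larger than $\max_{i\leq t} w_i \iota / \gamma_t$ as claimed.

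The only mildly subtle point is that the quantifier ``for any $\theta^\star \in \Delta(\mc{A})$'' ranges over a continuous set, so a naive union bound would fail; the linearization step above is what resolves this, reducing the continuum of choices to the $A$ vertices of the simplex. Apart from that, the argument is essentially a bookkeeping reduction to Lemma~\ref{lemma:highprob}, and I do not anticipate any genuine obstacle.
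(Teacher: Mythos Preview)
The proposal is correct. The paper does not actually prove this lemma itself; it simply cites it as Lemma~21 in \cite{bai2020near} and uses it as a black box. Your argument---applying Lemma~\ref{lemma:highprob} with the deterministic coefficient vector $c_i = 2\gamma_t e_a$, taking a union bound over the $AT$ pairs $(a,t)$, and then extending to arbitrary $\theta^\star \in \Delta(\mc{A})$ by linearity---is exactly the standard proof of this result and matches the argument in the cited reference. Your version even recovers the slightly sharper constant $\iota/(2\gamma_t)$ before relaxing it to the stated bound.
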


\section{Concluding Remarks}\label{sec:conclusions}

In this paper, we have considered multi-agent reinforcement learning with efficient exploration in general-sum Markov games. We have proposed the V-learning OMD algorithm that provably finds an $\epsilon$-approximate coarse correlated equilibrium in at most $\widetilde{O}( H^6S A /\epsilon^2)$ episodes. As a useful side result, we have introduced an anytime online mirror descent algorithm with a dynamic learning rate and a high-probability regret bound. Our algorithm is decentralized and can readily scale up to an arbitrary number of agents without suffering from the exponential dependence.

\bibliographystyle{abbrvnat}
\bibliography{ref}

\end{document}